\DeclareMathOperator{\im}{im}
\theoremstyle{plain}
\newtheorem{theorem}{Theorem}[section]
\newtheorem{proposition}[theorem]{Proposition}
\newtheorem{lemma}[theorem]{Lemma}
\theoremstyle{definition}
\newtheorem{assumption}[theorem]{Assumption}
\theoremstyle{remark}
\title{Towards Identifiability of Interventional Stochastic Differential Equations}
\author[1,2,7]{Aaron Zweig}{}
\author[1,3,4,5]{Zaikang Lin}
\author[2,6,7]{Elham Azizi}
\author[1,2]{David A. Knowles}
\affil[1]{%
    New York Genome Center\\
    New York, U.S.
  }
\affil[2]{%
    Department of Computer Science\\
    Columbia University\\
    New York, U.S.
}
\affil[3]{%
    Department of Applied Mathematics and Applied Physics\\
    Columbia University\\
    New York, U.S.
}
\affil[4]{%
Institute of Computational Biology\\
Helmholtz Munich\\
Munich, Germany
}
\affil[5]{%
Department of Mathematics\\
Technische Universität München\\
Munich, Germany
}
\affil[6]{%
    Department of Biomedical Engineering\\
    Columbia University\\
    New York, U.S
}
\affil[7]{%
    Irving Institute for Cancer Dynamics, Columbia University
}
\begin{document}
\maketitle

\begin{abstract}
We study identifiability of stochastic differential equations (SDE) under multiple interventions.  Our results give the first provable bounds for unique recovery of SDE parameters given samples from their stationary distributions. We give tight bounds on the number of necessary interventions for linear SDEs, and upper bounds for nonlinear SDEs in the small noise regime.  We experimentally validate the recovery of true parameters in synthetic data, and motivated by our theoretical results, demonstrate the advantage of parameterizations with learnable activation functions in application to gene regulatory dynamics.
\end{abstract}

\section{Introduction}

Stochastic dynamical systems are ubiquitous as models for natural data.  They are perfectly suited for application to time-series data, and therefore also a good candidate to characterize systems that reach a steady state in the limit. If a system is governed by some stochastic differential equation (SDE) and the same system is observed under different interventions, ideally one would learn the underlying parameters governing the dynamics, and guarantee accurate prediction under new interventions.

However, in many natural settings, data is modeled as following an SDE even if one does not have access to explicit trajectories. Studies of ecological systems focus on the long-term survival of multiple species modeled by the quasi-stationary state of SDEs with environmental factors as perturbations~\citep{hening2021stationary}.  The application of flow cytometry to protein signaling networks under perturbation~\citep{sachs2005causal} is destructive and yields protein quantification at one time point, modeled using the stationary distributions of linear SDEs in~\citet{varando2020graphical}.

One highly motivating application is single-cell genomic sequencing with high-throughput CRISPR perturbations.  Biologists are often interested in inferring the gene regulatory network (GRN) that characterizes the dynamics of gene expression, informing which genes should be targeted for treatment~\citep{dixit2016perturb}.  But the destructive nature of sequencing makes it impossible to observe the trajectory of a single cell at multiple time-points, and in general it is difficult to obtain any time-series genomics data due to the high expense.  Therefore, practitioners often only collect data at the end of an experiment, i.e., from the stationary distribution of the system.



Understanding the dynamics is essential for extrapolating to unseen settings, but noise and latent confounding makes it non-trivial to determine the true dynamics.  Causal disentanglement aims to learn causal factors in spite of these confounders, mainly focusing on directed acyclic graph (DAG) based methods.  To demonstrate these methods are well-founded, there is considerable effort devoted to understanding which models have identifiability guarantees~\citep{lachapelle2022disentanglement}.  However, these models suffer from inherent weakness, in particular 1) being unable to represent cycles or 2) approximate continuous-time dynamical models.

There has been renewed interest in modeling with stochastic differential equations (SDE) directly~\citep{peters2022causal}.  In the genomic context, there is precedent for this type of modeling to represent the so- called ``Waddington landscape''~\citep{waddington2014strategy}, the hypothetical energy surface of cells.  Furthermore, SDEs are commonly used for simulating transcriptomic datasets from a given gene regulatory network~\citep{pratapa2020benchmarking,dibaeinia2020sergio}.

As demonstrated in the context of diffusion models, SDEs are fully expressive in practice and can accurately generate observational data~\citep{song2021maximum}.  But to identify the true underlying SDE requires assumptions on the model.  Foundational theoretical works on identifying dynamical systems typically learn from many trajectories or even the infinitesimal generator of the dynamics~\citep{hansen2014causal}, leaving open the harder setting of observing only the stationary distribution.


Our interest in this work is to verify which parametric assumptions are necessary for dynamical systems to have identifiability guarantees without trajectory data.  Namely:
\begin{center}
    \textit{How many interventions are necessary to identify the parameters of a stochastic differential equation, only given access to the stationary distribution?}
\end{center}

\paragraph{Contributions} In this work, we offer the first analysis of identifiability of interventional stochastic differential equations, with data restricted to the stationary measure.  Specifically:

\begin{itemize}
    \item We characterize tight bounds on the number of interventions necessary for identifiability of linear SDEs with shift interventions.
    \item We extend this analysis to nonlinear SDEs in the small noise regime, showing that identifiability is possible even without knowing the activation function of the true model.
    \item We apply this insight to synthetic data and semi-synthetic genomic data to confirm the efficacy of learned activations in causal SDEs, which improve expressiveness without sacrificing a simple structure and enable the inference of gene regulatory networks.
\end{itemize}
\section{Setup}

\subsection{Notation}

We will write the elementary basis vectors as $\{e_i\}_i$.  We will consider $\sigma: \mathbb{R}^n \rightarrow \mathbb{R}^n$ as any elementwise function, i.e. $\sigma_i(x) = \sigma_i(x_i)$.  This includes elementwise activations as they are applied in multilayer perceptrons (MLPs), but we also allow for elementwise functions where each component acts differently.  Writing $\sigma'$ will, unless otherwise described, denote the map $\mathbb{R}^n \rightarrow \mathbb{R}^n$ that applies elementwise the derivative of each component function, i.e. $\sigma'(x) = [\sigma_1'(x_1), \dots, \sigma_n'(x_n)]$.  We will use $Jf$ to denote the Jacobian of a vector valued function $f$, and $\Delta f$ to denote the Laplacian of $f$.  We will use the notation $diag$ to map vectors to diagonal matrices, or to map matrices to exclusively their diagonal part.

We use $s_i(A)$ to denote the $i$th singular value of matrix $A$.  We let $P_A$ denote the orthogonal projection onto the image of matrix $A$, and $P_A^\perp = I - P_A$ the orthogonal projection onto its complement.  We let $A^\dag$ denote the pseudoinverse of $A$, and note that if $A$ has linearly independent columns it is a left inverse such that $A^\dag A = I$, likewise for rows and right inverse.  We let $\|\cdot\|$ denote the spectral norm and $\|\cdot\|_F$ the Frobenius norm.  We also introduce a minimal signed permutation distance $d_P(X, Y) = \min_{\Lambda, \Pi} \|X - Y\Pi \Lambda\|$ where $\Lambda$ is minimized over diagonal matrices with entries in $\pm 1$ and $\Pi$ is minimized over permutation matrices.

We will use $\lesssim$ to denote inequality up to constant factor (treating some problem parameters as constants where specified), and similarly reserve the notation $\tilde{C}$ for any absolute constant greater than zero.  Finally iid indicates independent and identically distributed.

\subsection{Stochastic Differential Equations}

We consider SDEs of the following form, where $v$ is a vector field and $B_t$ is standard Brownian motion:
\begin{align}
    dX_t = v(X_t)dt + \sqrt{\epsilon}dB_t.
\end{align}
We will only consider autonomous systems, i.e. where the drift and noise terms have no dependence on time $t$.  We will enforce the weak conditions on the drift and noise to guarantee a unique stationary distribution~\citep{berglund2021long}, with a density $p$ that satisfies the Fokker-Planck equation,
\begin{align}
    0 = -\nabla \cdot (pv) + \frac{\epsilon}{2} \Delta p.
\end{align}

\subsection{Linear SDEs}

We need some classical facts about linear SDEs, which are better understood than their nonlinear cousins, since Fokker-Planck can be solved explicitly.

\begin{theorem}[\citet{sarkka2019applied}]\label{thm:sarkka-linear}
    Given square matrices $L, Q$ and vector $c$, consider the SDE
    \begin{align}
        dX_t = (LX_t + c) dt + Q dB_t 
    \end{align}
    Assume $L$ is Hurwitz, i.e., all its eigenvalues have strictly negative real parts, and $Q$ is full rank. Then the unique stationary distribution is $\mathcal{N}(-L^{-1}c, \omega)$ where $\omega$ is the unique solution to the Lyapunov equation,
    \begin{align}
        L\omega + \omega L^T + QQ^T = 0.
    \end{align}
\end{theorem}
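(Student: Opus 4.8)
The plan is to write down the explicit solution of the linear SDE by variation of constants, read off its law (Gaussian for all $t$ when started from a deterministic or Gaussian initial condition), and pass to the limit $t \to \infty$. Applying It\^o's formula to $e^{-Lt}X_t$ gives
\begin{align}
    X_t = e^{Lt}X_0 + \int_0^t e^{L(t-s)}c\,ds + \int_0^t e^{L(t-s)}Q\,dB_s,
\end{align}
where the first two terms are deterministic given $X_0$ and the last is a Wiener integral of a deterministic matrix-valued integrand. Hence $X_t$ is Gaussian with mean $m(t) = e^{Lt}X_0 + \int_0^t e^{L(t-s)}c\,ds$ and, by the It\^o isometry, covariance $\Sigma(t) = \int_0^t e^{Lu}QQ^T e^{L^Tu}\,du$.

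Next I would take $t \to \infty$. Since $L$ is Hurwitz there are $M,\alpha > 0$ with $\|e^{Lt}\| \le M e^{-\alpha t}$, so $e^{Lt}X_0 \to 0$, $m(t) \to \int_0^\infty e^{Lu}c\,du = -L^{-1}c$ (using $\tfrac{d}{du}e^{Lu} = L e^{Lu}$ and $e^{Lu}\to 0$), and the integral defining $\Sigma(t)$ converges absolutely to $\omega := \int_0^\infty e^{Lu}QQ^T e^{L^Tu}\,du$. Integrating the identity $\tfrac{d}{du}\big(e^{Lu}QQ^T e^{L^Tu}\big) = L e^{Lu}QQ^T e^{L^Tu} + e^{Lu}QQ^T e^{L^Tu}L^T$ over $u \in [0,\infty)$ and using the decay at infinity yields $L\omega + \omega L^T + QQ^T = 0$. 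Uniqueness of the Lyapunov solution follows because the linear operator $Y \mapsto LY + YL^T$ on matrices has spectrum $\{\lambda_i(L) + \lambda_j(L)\}$, which avoids $0$ when $L$ is Hurwitz, hence it is invertible. Therefore $X_t \Rightarrow \mathcal{N}(-L^{-1}c,\omega)$, and since this limiting law does not depend on $X_0$, it is the unique stationary distribution; the general ergodicity criterion of \citet{berglund2021long} can be invoked to cover arbitrary (non-Gaussian) initializations.

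As an independent cross-check one can verify directly that the Gaussian density $p(x) \propto \exp\!\big(-\tfrac12 (x+L^{-1}c)^T \omega^{-1}(x+L^{-1}c)\big)$ satisfies the Fokker--Planck equation $0 = -\nabla\cdot(pv) + \tfrac12 \sum_{i,j}(QQ^T)_{ij}\partial_i\partial_j p$ with $v(x) = Lx + c$; collecting the terms quadratic, linear, and constant in $x$ reduces this to the Lyapunov identity above together with $v(-L^{-1}c) = 0$. I expect the only real obstacle to be bookkeeping: justifying the interchange of limit, derivative, and integral in the analysis of $\Sigma(t)$ (all controlled by the exponential decay from the Hurwitz assumption), and tracking transposes carefully so the Lyapunov equation emerges in the stated orientation rather than its transpose. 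No machinery beyond It\^o's formula, the It\^o isometry, and the spectral mapping for the Lyapunov operator is required.
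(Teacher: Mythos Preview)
Your argument is correct and is the standard textbook derivation: the variation-of-constants formula gives the Gaussian law of $X_t$, the Hurwitz assumption gives exponential decay of $e^{Lt}$ and hence convergence of the mean to $-L^{-1}c$ and of the covariance to the integral $\omega = \int_0^\infty e^{Lu}QQ^Te^{L^Tu}\,du$, differentiating under the integral yields the Lyapunov identity, and the spectral-mapping argument for $Y \mapsto LY + YL^T$ gives uniqueness. The Fokker--Planck cross-check is also fine.

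There is nothing to compare against, however: the paper does not prove this statement. Theorem~\ref{thm:sarkka-linear} is quoted as a classical fact from \citet{sarkka2019applied} and is used as background without proof; the paper's own contributions begin with Proposition~\ref{prop:linear_deterministic} and Theorem~\ref{thm: linear_tight}. So your write-up is an addition rather than a replication, and what you have is essentially the derivation one finds in the cited reference.
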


\subsection{Interventional SDEs}

We focus on the setting where we only observe the SDE through the induced stationary density under $k$ different shift interventions.   Specifically, there are vectors $\{c_i\}_{i=1}^k$ with each $c_i \in \mathbb{R}^n$, and we observe the stationary distribution of the SDE,
\begin{equation}\label{eq:sde}
    dX_t = (v(X_t) + c_i)dt + \sqrt{\epsilon}dB_t.
\end{equation}


We denote the concatenated intervention column vectors by the matrix $C \in \mathbb{R}^{n \times k}$.  In the causal disentanglement literature, shift interventions typically give fewer guarantees~\citep{buchholz2024learning,squires2023linear}, and even in the case of linear SDEs, the drift is trivially not identifiable without knowledge of the intervention vectors.  Therefore, we assume knowledge of the interventions $C$.
\section{Related Work}

\subsection{Causal Representation}

In terms of modeling causality~\citep{pearl2009causality}, the most popular underlying model is the structural causal model (SCM), which characterizes the conditional distribution of a random variable under arbitrary intervention.  Learning an SCM typically requires very strong assumptions such as sparsity~\citep{scholkopf2021toward}, interventional data~\citep{lachapelle2022disentanglement}, parametric assumptions~\citep{peters2014identifiability}, among many other results.  Sparsity is a very common theme in these models, though it may also be expressed in an assumption that the number of latent variables is small, i.e. a low-rank constraint~\citep{fang2023low}.

\subsection{Causal Disentanglement with Interventional Data}

The bulk of the literature on causal disentanglement focuses on SCMs with an underlying DAG.  Relevant to our work are results that assume access to multiple interventional environments, either acting directly on the observed variables~\citep{brouillard2020differentiable} or identifying a latent model under some distributional assumptions~\citep{lachapelle2022disentanglement,squires2023linear,buchholz2024learning}.

Some of these works have addressed the crucial limitation of acyclicity~\citep{zheng2018dags, lee2019scaling,atanackovic2023dyngfn}, but without necessarily incorporating dynamics.  Many works require hard interventions, where an intervened variable is a function of exogenous noise, although some can handle soft interventions~\citep{zhang2024identifiability}.

\subsection{Dynamical System Methods}

Previous work has considered modeling perturbations with dynamical systems~\citep{peters2022causal}.  One can prove identifiability of SDE parameters from the generator~\citep{hansen2014causal} or trajectories~\citep{guan2024identifying, rajendran2024interventional}.  Other methods work in our harder setting where observed data is drawn from the stationary distribution under an SDE, and match a learned SDE under numerous interventions.  These works focus on linear drift and intervention-dependent parameters~\citep{rohbeck2024bicycle} or non-linear drift with shift interventions~\citep{lorch2024causal}.  Notably, neither paper gives theory to confirm if these models are identifiable.

There are also methods specific to a particular scientific domain.  In genomics, some methods act on pseudotime, an inferred notion of time from cell states when very few ``real'' timepoints are available~\citep{wang2024regvelo,hossain2024biologically}.  Although harder to obtain due the destructive nature of RNA sequencing, genuine temporal data (even with very few timepoints) can also be modeled with the intent of extracting a GRN~\citep{lin2025interpretable}.

The closest work to ours proves an identifiability bound for linear SDEs under a strong sparsity assumption~\citep{dettling2023identifiability}.  However, their result doesn't consider any interventional data, and the exact pattern of sparsity in the drift matrix is assumed to be known a priori, which is rarely the case in applied problems of interest.  Another closely related work is~\citet{guan2024identifying}, which can simultaneously infer the drift and diffusion.  However, this work applies only in an easier setting that assumes linearity, doesn't study interventions, and assumes multiple temporal marginals rather than just the stationary distribution.  Similarly~\citet{wang2024generator} proves identifiability with interventions but assumes access to the distribution over trajectories.

\section{Main Results}

We consider two main parameterizations of the drift as linear or a two-layer neural network (an MLP) to verify when the model may be uniquely identified up to appropriate invariances.  Prior work has considered sparsity in the linear setting only~\citep{dettling2023identifiability}, but we focus on parameterizations that project to a low-dimensional space.  In other words, although the ambient dimension $n$ may be large, we assume the hidden dimension $r$ is much smaller, at least $n>2r$, and ideally the number of interventions to uniquely identify the parameters should scale with $r$.  This focus is driven by the empirical observation that high-dimensional dynamics are often driven by a low-dimensional subset, for example in gene modules~\citep{segal2005learning}.

\subsection{Linear case}

We start with the linear case and consider the parameterization $v(x) = (AB - D)x$ where $A \in \mathbb{R}^{n \times r}$, $B \in \mathbb{R}^{r \times n}$, $D \in \mathbb{R}^{n \times n}$.  Clearly $AB$ is a redundant parameterization of a rank $r$ matrix, but we use this notation to contrast the non-linear setting in Section~\ref{sec:non-linear}.  To ensure the drift is Hurwitz and the SDE has a stationary distribution, we assume $\|AB\| \leq \gamma < 1$ and $D \succeq I$.  Intuitively, $AB$ drive the dynamics while $D$ is a decay term to prevent unbounded dynamics.  This parameterization is similar to the one used in~\citet{rohbeck2024bicycle}, which also uses linear SDEs but considers hard interventions with having entirely new rows in the drift matrix, rather than shift interventions. 


It is impossible to get a good deterministic guarantee if the dynamics and interventions are chosen adversarially, as seen in the following proposition:

\begin{proposition}\label{prop:linear_deterministic}
    In the linear setting, there exist choices for parameters $A, B, D$ and interventions $C$ such that the drift matrix $AB - D$ is not identifiable with less than $n-r$ interventions.
\end{proposition}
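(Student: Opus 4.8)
The plan is to first reduce the identifiability question to a linear-algebraic condition on the drift matrix $L = AB - D$, and then to write down an explicit pair of admissible parameter sets, together with a small family of interventions, under which the two induce identical stationary laws.

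For the reduction, note that by \Cref{thm:sarkka-linear} (with $Q = \sqrt{\epsilon}\,I$), under the shift intervention $c_i$ the stationary law of \eqref{eq:sde} is $\mathcal{N}(-L^{-1}c_i,\,\omega)$, where $\omega$ solves $L\omega + \omega L^T + \epsilon I = 0$; importantly $\omega$ does not depend on $i$. Hence two admissible drift matrices $L_1 \neq L_2$ are observationally equivalent under interventions $C$ iff they have the same stationary covariance and the same stationary means, which, writing $M := L_2 - L_1$, amounts to $M\omega + \omega M^T = 0$ and $M L_1^{-1} C = 0$ (the first is obtained by subtracting the two Lyapunov equations, and together with $L_2$ being Hurwitz it forces $\omega$ to be the stationary covariance of $L_2$ as well). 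So it suffices to produce admissible $L_1, L_2$ with $M \neq 0$ and a matrix $C$ with at most $n-r-1$ columns satisfying these two identities.

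I would take $L_1 = -2I$ (realized by $A = 0$, $D = 2I \succeq I$), so that $\omega = \tfrac{\epsilon}{4}I$ and the covariance condition collapses to ``$M$ skew-symmetric''. Fixing orthonormal $u, w \in \mathbb{R}^n$ and a small $\delta \in (0, \gamma/2)$, set $M := \delta(u w^T - w u^T)$ (rank $2$, skew) and $L_2 := L_1 + M$. One checks $L_2 = A_2 B_2 - D_2$ with $A_2 B_2 := 2\delta\, u w^T$ (rank $1 \le r$, spectral norm $2\delta < \gamma$) and $D_2 := 2I + \delta(u w^T + w u^T)$ (symmetric with eigenvalues in $\{2, 2\pm\delta\}$, hence $D_2 \succeq I$), and $L_2$ is Hurwitz since its eigenvalues are $-2$ plus the purely imaginary eigenvalues of $M$; since $M \neq 0$ we have $L_1 \neq L_2$. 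Finally, for any $k \le n - r - 1$ — so $k \le n-2 = \dim\{u,w\}^\perp$, using $r \ge 1$ — pick $C$ with every column in $\{u,w\}^\perp$; then $M C = 0$, which gives $M L_1^{-1} C = -\tfrac12 M C = 0$ and, because $L_2(-\tfrac12 c_i) = c_i - \tfrac12 M c_i = c_i$, also $L_2^{-1} C = L_1^{-1} C$. Thus both systems yield stationary law $\mathcal{N}(\tfrac12 c_i, \tfrac{\epsilon}{4}I)$ under every one of these interventions, so the drift matrix $AB-D$ is not identifiable from fewer than $n-r$ of them.

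The step I expect to be the main obstacle is the admissibility check for $L_2$: a nonzero skew-symmetric part cannot be absorbed into the symmetric decay term $D$, so it must be carried by the low-rank factor $AB$, and reconciling ``$M$ skew'' (forced by matching $\omega$) with ``$AB$ of rank $\le r$'' is the delicate point — here it is handled by using the minimal rank-$2$ skew perturbation, peeling off $A_2 B_2 = 2\delta\,uw^T$ and pushing the rest into $D_2$. Choosing $L_1 = -2I$ so that $\omega \propto I$ is the trick that makes both constraints simultaneously easy; a minor point to keep straight is the order of quantifiers, since $C$ is allowed to depend on $L_1,L_2$. (In fact the same construction gives non-identifiability for every $k \le n-2$, which is stronger than the stated bound; if one wants the example to be tight precisely at $n-r-1$ one instead uses a skew perturbation of rank $r+1$, reducible to rank $r$ by a symmetric shift, so that its kernel has dimension exactly $n-r-1$.)
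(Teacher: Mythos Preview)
Your proof is correct and rests on the same core mechanism as the paper's—introduce a perturbation $M$ with $M\omega+\omega M^T=0$ so the stationary covariance is preserved, and place the intervention columns in the kernel of the resulting mean shift so the stationary means also agree—but the concrete constructions differ in a way worth noting.

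The paper takes $D=I$ for \emph{both} systems and puts a small skew $Q$ inside the top-left $r\times r$ block (where $AB$ already lives), so that $AB+Q\omega^{-1}$ is again supported in that block and still has rank $\le r$; the interventions are the last $n-r$ standard basis vectors. Thus the paper's example exhibits non-identifiability even with the decay $D$ held fixed—exactly the assumption later imposed for the upper bound in Theorem~\ref{thm: linear_tight}—so its lower bound survives that assumption.

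Your route instead starts from the trivial base $L_1=-2I$ (making $\omega\propto I$ and the covariance condition simply ``$M$ skew''), uses the minimal rank-two skew $M=\delta(uw^T-wu^T)$, and absorbs its symmetric half into a \emph{different} decay $D_2$. This is cleaner algebraically and, as you observe, in fact yields non-identifiability with up to $n-2$ interventions. The trade-off is that your two admissible parameterizations have $D_1\ne D_2$, so your example does not by itself show that $AB$ remains unidentifiable once $D$ is observed; the paper's does. Recovering that extra strength would require keeping $D$ fixed and carrying the entire skew part inside the rank-$r$ factor, which pushes you toward the paper's block construction.
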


The proof is provided in Appendix~\ref{sec:linear_deterministic}. Intuitively, one can choose interventions that don't affect the system dynamics at all.  But this situation is pathological, and in the linear case under some weak distributional assumptions we can show identifiability.

\begin{assumption}\label{ass:sampling}
    The matrices $A$ and $B$ are drawn from a density such that they are almost surely (a.s.) full-rank, have spectral norm less than some fixed $\gamma < 1$, and are invariant to applying a rotation matrix on the left or the right.
    Each column of $C$ is drawn iid from some distribution with a density on $\mathbb{R}^n$.  The decay matrix $D$ is observed.
\end{assumption}

We take care to explain why these assumptions are not particularly restrictive.  The low-rank constraint is already enforced by the drift function so $A$ and $B$ are almost surely full-rank when drawn from any density.  The spectral norm bound is necessary to guarantee the SDE has a stationary distribution.  And the rotational invariance assumption encodes an uninformative prior on which low-rank subspace governs the dynamics.  Two simple choices that satisfy these assumptions are sampling $A$ and $B^T$ from either the uniform measure on the Stiefel manifold (rectangular matrices with orthonormal columns) or with iid Gaussian entries, and then scaling down to ensure a spectral norm strictly smaller than one, with $C$ sampled from any density.

Additionally, for theoretical tractability, we presume knowledge of the decay term $D$.  This is a stronger assumption, but plausible in some applied contexts.  For example, in single-cell genomics the decay rate of genes can be estimated with external experiments (e.g., from BRIC-seq~\citep{imamachi2014bric}).

Altogether, we can now present a nearly tight identifiability result in the linear case.

\begin{theorem}\label{thm: linear_tight}
    Consider the linear drift in Equation~\ref{eq:sde}.    Then under Assumption~\ref{ass:sampling}, the drift $AB - D$ is identifiable a.s. with $r$ interventions, and unidentifiable a.s. with at most $r-2$ interventions.
\end{theorem}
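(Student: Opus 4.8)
The plan is to translate the identifiability question into a system of equations on the unknowns $A, B$ (with $D$, $C$, $\epsilon$ known) coming from the stationary distributions, and then argue dimension-counting. By Theorem~\ref{thm:sarkka-linear}, intervention $c_i$ produces a Gaussian stationary law whose mean is $m_i = -(AB-D)^{-1} c_i$ and whose covariance $\omega$ solves the Lyapunov equation $(AB-D)\omega + \omega(AB-D)^T + \epsilon I = 0$; crucially $\omega$ does not depend on $i$, so only the means carry interventional information. Thus observing the stationary law under $C$ is equivalent to observing $M := -(AB-D)^{-1} C \in \mathbb{R}^{n\times k}$, i.e. the linear constraint $(AB-D)M = -C$, together with the single Lyapunov constraint from $\omega$. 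I would first show that the Lyapunov/covariance constraint alone pins down $AB - D$ up to essentially nothing new beyond what the mean constraints give (or, more simply, that the mean constraints already suffice in the generic case), so the heart of the matter is: given known $D$ and known $C \in \mathbb{R}^{n\times k}$ and known $M$, when is $AB$ — a rank-$\le r$ matrix — uniquely determined by $(AB-D)M = -C$?

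For the positive direction ($k = r$ interventions suffice a.s.), rewrite the constraint as $(AB)M = DM - C =: N$, where $N$ is observed. So we must recover a rank-$\le r$ matrix $AB$ from its product with the known $n\times r$ matrix $M$. If $M$ has full column rank $r$ (which holds a.s. since each column of $C$ is drawn from a density, and $M = -(AB-D)^{-1}C$ is an invertible image of $C$), then $AB$ is determined on the $r$-dimensional column space of $M$. The key linear-algebra fact to prove is that a rank-$\le r$ matrix is uniquely determined by its restriction to a generic $r$-dimensional subspace: if $AB$ and $A'B'$ are both rank $\le r$ and agree on $\mathrm{col}(M)$, then since $\mathrm{row}(AB) = \mathrm{row}(A'B')$ would be forced... actually the clean statement is that the row space of $AB$ lies in a fixed $r$-dimensional space only generically, so I would instead argue: $AB = N M^\dagger$ plus a correction supported on $(\mathrm{col}\,M)^\perp$; requiring rank $\le r$ while $NM^\dagger$ already has rank $r$ generically forces the correction to vanish. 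I would make this precise by a rank/dimension argument showing the solution set is a single point for $C$ outside a measure-zero set.

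For the negative direction (at most $r-2$ interventions cannot identify a.s.), I would exhibit a nontrivial family of rank-$\le r$ matrices all consistent with the observed data. With $k \le r-2$ mean constraints, the map $AB \mapsto (AB)M$ restricted to the manifold of rank-$r$ matrices has image of dimension at most $nk \le n(r-2)$, while the manifold $\{AB : A\in\mathbb{R}^{n\times r}, B\in\mathbb{R}^{r\times n}\}$ of rank-$\le r$ matrices has dimension $r(2n-r) = 2nr - r^2$; since $2nr - r^2 - n(r-2) = nr - r^2 + 2n = r(n-r) + 2n > 0$, a generic fiber is positive-dimensional, giving infinitely many consistent parameters and hence non-identifiability a.s. The gap between $r-2$ and $r$ comes from this counting not being tight at $k = r-1$; I would remark that closing it is what makes the theorem ``nearly'' tight. \emph{The main obstacle} I anticipate is the positive direction: rigorously controlling that the rank-$r$ constraint, combined with genericity of $C$ (propagated through the nonlinear map $C \mapsto M = -(AB-D)^{-1}C$, which itself depends on the unknown $AB$), rules out all alternative factorizations — in particular ensuring the ``bad set'' of interventions is genuinely measure zero and not merely nonempty-complement, and handling the fact that a competing solution $A'B'$ induces a different $M'$, so one is really comparing two constraint systems rather than solving one.
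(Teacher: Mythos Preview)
Your proposal has a real gap: you dismiss the covariance $\omega$ as giving ``essentially nothing new beyond what the mean constraints give,'' but in fact the covariance is the workhorse of the upper bound and must be explicitly respected in the lower bound.

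\textbf{Upper bound.} The claim that $r$ mean constraints alone determine $AB$ is false. With $(AB)M = N$ for known full-column-rank $M,N\in\mathbb{R}^{n\times r}$, every matrix of the form $N(M^\dagger + F)$ with $FM=0$ is a rank-$\le r$ solution; this is an $r(n-r)$-dimensional family, so the ``correction supported on $(\mathrm{col}\,M)^\perp$'' does \emph{not} vanish under the rank constraint. The paper's argument uses the means only to recover $\im A$ (via Woodbury, $-L^{-1}C - D^{-1}C$ has range equal to $\im A$), hence $P_A^\perp$. The real information then comes from the Lyapunov equation: multiplying on the right by $P_A^\perp$ and using $L^T P_A^\perp = -DP_A^\perp$ (since $A^T P_A^\perp=0$) collapses it to an identity giving $L^{-1}$ on an $(n-r)$-dimensional subspace. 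Combined with $L^{-1}C$ on the $r$-dimensional $\im C$, this generically spans $\mathbb{R}^n$ and pins down $L$.

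\textbf{Lower bound.} Your dimension count tracks only the mean constraints and ignores that any competing $\hat L$ must also induce the \emph{same} $\omega$, i.e.\ satisfy $(\hat L - L)\omega + \omega(\hat L - L)^T = 0$. This is a genuine extra constraint your fiber argument never intersects with. The paper instead builds an explicit perturbation: with $k=r-2$ interventions there is a two-dimensional subspace orthogonal to both $\im(ABL^{-1}C)$ and $\ker A^T$; taking a basis $u,v$ there and setting $Q = B^TA^T(uv^T - vu^T)AB$, the drift $\hat L := L + \omega Q$ is Hurwitz, retains the rank-$r$-plus-$D$ form, satisfies the same Lyapunov equation (skew-symmetry of $Q$ kills $\omega Q\omega + \omega Q^T\omega$), and gives $\hat L^{-1}C = L^{-1}C$. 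The need for \emph{two} vectors to build a nonzero skew-symmetric $uv^T - vu^T$ is exactly why the bound lands at $r-2$ rather than $r-1$, not an artifact of loose counting.
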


See Appendix~\ref{sec:linear_tight} for the proof. Naively, one might count $2nr$ unknown parameters in the entries of $A$ and $B$, and assume the $n^2$ entries of the stationary covariance $\omega$ would be enough to identify them, but this isn't the case.  One needs exactly enough interventions to account for the hidden rank.

\paragraph{Identifying the decay.} If diagonal decay term $D$ is not inferred in advance, learning it simultaneously is comparable to the setting of robust PCA~\citep{candes2011robust} or recovery of a diagonal plus a positive semidefinite low rank term~\citep{saunderson2012diagonal}.  However, unlike the usual matrix completion setting where entries of the matrix are revealed uniformly at random, we have access to correlated low-rank measurements of the form $e_ic_j^T$ for $\{e_i\}_{i=1}^n$ the elementary basis.  This setting is well studied with sub-Gaussian measurement vectors~\citep{zhong2015efficient}, but the tools do not readily apply to our setting with non-random measurements and the constraints of the Lyapunov equation.  Nevertheless, we observe in Section~\ref{sec:experiments} that identifiability with an unknown diagonal decay matrix $D$ is empirically achieved, while still subject to the lower bound established in Theorem~\ref{thm: linear_tight}.

\subsection{Nonlinear case}\label{sec:non-linear}

The nonlinear case is more challenging, because there is no longer a nearly closed form characterization of the stationary distribution.  To apply tools from above, we consider when a linearized SDE can approximately capture the true dynamics, by restricting to contractive drift and small noise.  We consider the vector field $v(x) = A\sigma(Bx) - x$, with the constraints that $\|A\|, \|B\| \leq 1$.

\begin{assumption}\label{ass:weak_nonlinear}
    The matrices $A$ and $B^T$ are sampled uniformly among matrices with orthonormal columns i.e. the Stiefel manifold, each column of $C$ is drawn iid from an isotropic Gaussian. The map $\sigma \in \mathcal{C}^2$ is odd and acts on each element independently and satisfies:
    \begin{enumerate}
        \item $\max_{x\in\mathbb{R}} \sigma_i'(x) = \gamma < 1$
        \item $\min_{x\in\mathbb{R}} \sigma_i'(x) = \tau > 0$
        \item $\max_{x\in\mathbb{R}} |\sigma_i''(x)| = M < \infty$
        \item $Var_{g \sim \mathcal{N}(0,1)}(1/\sigma_i'(g)) \geq \nu > 0$ for all $i$.
        \item The set $\{x \in \mathbb{R}: \sigma_i''(x) = 0\}$ is measure zero.
    \end{enumerate}
\end{assumption}

The stronger assumptions on $A$ and $B^T$ are necessary for quantiative bounds.  Furthermore, their columns have unit norm, which is necessary to prevent scaling issues that would lead to non-identifiability where one could move constant factors from $A$ or $B$ into the definition of $\sigma$.

The strongest constraint here is condition 1, upper bounding the first derivative of the activation, as it implies the noiseless dynamics are globally contractive, but it guarantees a stationary distribution exists for any intervention vector $c$. Furthermore, these conditions still allow for a wide class of possible activations $\sigma$, mainly constrained to be increasing with bounded first and second derivative.  We observe the improved expressiveness of the stationary distribution of nonlinear SDEs in Figure~\ref{fig:contour}.  Note constraint 4 and 5 rule out functions that are linear (or locally linear).  This enables a stronger possible identifiability guarantee than only recovering $A$ and $B$ up to rotation as in the linear case.

\begin{figure}
     \centering
        \includegraphics[width=0.5\textwidth]{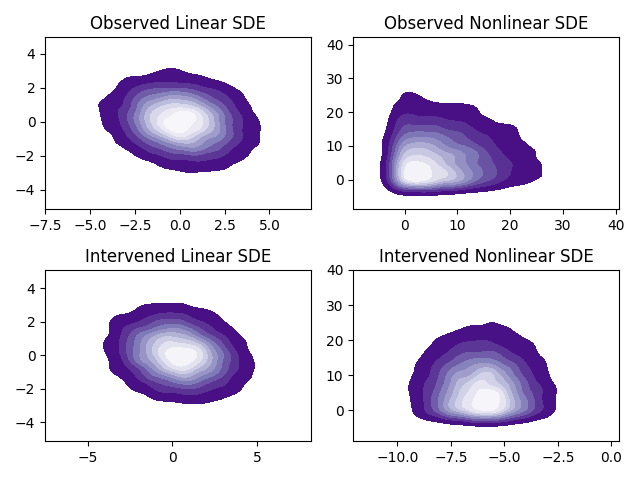}
        \caption{Contour plot of the stationary SDE under different activations and interventions.  Activation contractivity enforces one mode, but the linear distribution is Gaussian with fixed covariance across interventions, while the nonlinear distribution can be more expressive.}
        \label{fig:contour}
\end{figure}

\paragraph{Noiseless Setting.}

One might be tempted to simply consider the noiseless case, where the SDE reduces to an ordinary differential equation (ODE), and rather than recovering the stationary distribution one recovers instead the unique global stable point.  However, the noiseless case cannot take advantage of the low-rank structure, and requires $\Omega(n)$ interventions a.s. for identifiability.

\begin{proposition}\label{prop:ode_failure}
    Setting $\epsilon = 0$ in equation~\ref{eq:sde}, the parameters are a.s. not identifiable with fewer than $n-r$ interventions.
\end{proposition}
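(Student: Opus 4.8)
The plan is to construct, for almost every choice of the parameters $A,B,\sigma$ and interventions $C$, a second admissible parameter configuration inducing exactly the same observations under all $k$ interventions whenever $k < n-r$. Setting $\epsilon = 0$, Equation~\ref{eq:sde} becomes the ODE $\dot X_t = A\sigma(BX_t) - X_t + c_i$. Since $\|A\|,\|B\|\le 1$ and $\sup_x\sigma_i'(x) = \gamma<1$, the map $x\mapsto A\sigma(Bx)+c$ is a $\gamma$-contraction, so by the Banach fixed point theorem (and a quadratic Lyapunov function $\tfrac12\|x-x_i^\star\|^2$) there is a unique, globally asymptotically stable equilibrium $x_i^\star$; the invariant measure collapses to $\delta_{x_i^\star}$, so the data under intervention $i$ is precisely the solution $x_i^\star$ of $A\sigma(Bx_i^\star) = x_i^\star - c_i$. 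Note also that the equilibrium map $\Phi : c\mapsto x^\star(c)$ is a global diffeomorphism of $\mathbb R^n$, its inverse being $x\mapsto x - A\sigma(Bx)$ with everywhere-invertible Jacobian $I - A\,\mathrm{diag}(\sigma'(Bx))B$.

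I would perturb only $B$. Pick a unit vector $w\in\mathbb R^n$ orthogonal to $\mathrm{span}\{x_1^\star,\dots,x_k^\star\}$, a unit vector $u\in\mathbb R^r$, a small $\delta>0$, and set $\tilde B = B + \delta\, uw^\top$. Such a $w$ exists because $x_i^\star = \Phi(c_i)$, so the $x_i^\star$ are i.i.d.\ from an absolutely continuous law on $\mathbb R^n$ (pushforward of the density on the columns of $C$ through the diffeomorphism $\Phi$); for $k<n$ such a sample a.s.\ spans only a $k$-dimensional subspace, whose orthogonal complement is nontrivial. Then $\tilde B x_i^\star = Bx_i^\star + \delta u(w^\top x_i^\star) = Bx_i^\star$ for every $i$, so $x_i^\star$ remains an equilibrium of $\dot X = A\sigma(\tilde B X) - X + c_i$; and since $\|\tilde B\|\le \|B\|+\delta\le 1$ for $\delta$ small (using $\|B\|<1$ a.s.) the contraction argument again makes $x_i^\star$ its unique equilibrium. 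Hence $(A,\tilde B,\sigma)$ yields identical observations to $(A,B,\sigma)$, while $\tilde B\neq B$; in fact the two drift fields agree only on the hyperplane $\{w^\top x=0\}$, since $A$ has full column rank and each $\sigma_i$ is strictly increasing by assumption. This is exactly non-identifiability.

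The only points needing care, and none is deep, are (i) the genericity claim that the equilibria span a subspace of dimension exactly $k<n$, which reduces to the standard fact that $k\le n$ i.i.d.\ draws from an absolutely continuous measure are a.s.\ linearly independent, transported through $\Phi$; and (ii) that the perturbed model is admissible, i.e.\ $\|\tilde B\|\le 1$ and $\tilde B$ full rank, which holds for small $\delta$ because $\|B\|<1$ and $B$ is full rank almost surely. I expect (i)–(ii) to be the main (mild) obstacle in the write-up. I would also remark that the same argument applies for any $k\le n-1$, so the noiseless system genuinely requires $\Omega(n)$ interventions and cannot exploit the low-rank structure, in contrast with the diffusive case of Theorem~\ref{thm: linear_tight}.
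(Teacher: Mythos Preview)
Your argument is correct and, at its core, is the same rank-one perturbation of $B$ as in the paper: set $\tilde B = B + b\,w^\top$ with $w$ orthogonal to every equilibrium $x_i^\star$, so that $\tilde B x_i^\star = B x_i^\star$ and all observed fixed points are unchanged. The one genuine difference is how that orthogonal direction is obtained. The paper picks $u$ orthogonal to both $\im A$ and $\im C$ by a pure dimension count ($r + k < n$), and then deduces $u\perp x_i^\star$ from the structural identity $x_i^\star - c_i \in \im A$; this is why the threshold comes out as $n-r$ and no probabilistic argument on the $x_i^\star$ is needed. You instead argue that $c\mapsto x^\star(c)$ is a global diffeomorphism (inverse $x\mapsto x - A\sigma(Bx)$), push the density on $c$ forward, and use generic position of the $x_i^\star$ directly. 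Your route costs the extra diffeomorphism step but buys a strictly stronger conclusion: non-identifiability for all $k\le n-1$, not just $k\le n-r-1$, which sharpens the paper's own remark that the noiseless problem needs $\Omega(n)$ interventions. Your attention to admissibility of the perturbed model ($\|\tilde B\|\le 1$, full rank for small $\delta$) is also more careful than the paper's write-up, which leaves that implicit.
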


The proof is in Appendix~\ref{sec:ode_failure}. Intuitively, the equilibrium points of the noiseless ODE approximate the mean of the SDE for small noise (note this is not true for larger noise, for example see~\citet{ma2015complete}).  This suggests that in order to make use of the low-rank assumption on the dynamics, one must at least take advantage of second order moments of the SDE even in the small noise limit.

\paragraph{Moments in the zero-noise limit.}

As the noise converges to zero, the stationary distribution converges to a dirac centered on the global stable point, and there are no higher order moments.  However, the rescaled second-order moments have a non-trivial limit as noise goes to zero:

\begin{theorem}\label{thm:mean_cov}
   Let $x^*$ be unique solution $v(x^*) + c = 0$, and define $L = Jv(x^*)$.  If $\omega$ solves the Lyapunov equation $L\omega + \omega L^T + I = 0$, and $m_\epsilon$ and $\Sigma_\epsilon$ are the mean and covariance of the stationary distribution of Equation~\eqref{eq:sde} intervened by $c$, we have for sufficiently small $\epsilon$,
    \begin{align}
        \|m_\epsilon - x^*\| &\leq \left(\frac{\epsilon n}{1-\gamma}\right)^{1/2}, \\
        \|\Sigma_\epsilon/\epsilon - \omega\| &\lesssim \frac{\epsilon^{1/2} r^{5/2} n^{1/2} M}{(1-\gamma)^3}.
    \end{align}
\end{theorem}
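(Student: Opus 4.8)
The plan is to work directly with the stationary Fokker--Planck identity $\mathbb{E}\big[\langle v(X)+c,\nabla f(X)\rangle + \tfrac{\epsilon}{2}\Delta f(X)\big]=0$, valid for any sufficiently regular test function $f$ (the stationary law has all moments, since $v$ grows at most linearly and the drift is strongly monotone). Two structural facts drive everything. First, global contractivity: writing $v(x)-v(y)=(AD_{x,y}B-I)(x-y)$ with $D_{x,y}$ a diagonal matrix of mean values of $\sigma'$, one gets $\langle v(x)-v(y),x-y\rangle\le-(1-\gamma)\|x-y\|^2$ from $\|A\|,\|B\|\le 1$ and $\sigma_i'\le\gamma$; crucially the \emph{same} representation gives the bottleneck contraction $\langle B(v(x)-v(y)),B(x-y)\rangle\le-(1-\gamma)\|B(x-y)\|^2$ (pull the $B$ inside and use $\|BA\|\le1$). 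Second, at $x^*$ this yields $L+L^T\preceq-2(1-\gamma)I$, hence $\|L^{-1}\|\le(1-\gamma)^{-1}$, $\|e^{Lt}\|\le e^{-(1-\gamma)t}$, and $\|\omega\|\le(2(1-\gamma))^{-1}$.

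Writing $\delta=X-x^*$, I first harvest moment bounds by plugging in $f(x)=\|\delta\|^2$, $f(x)=\|B\delta\|^2$, and $f(x)=\|B\delta\|^4$: by the two contractivity statements the drift term is negative and proportional to $\|\delta\|^2$, $\|B\delta\|^2$, $\|B\delta\|^4$ respectively, while the Laplacian terms contribute $\epsilon n$, $\epsilon\,\mathrm{tr}(B^TB)\le\epsilon r$, and $\lesssim\epsilon r\|B\delta\|^2$ (using $B^TB\preceq I$). Stationarity then gives $\mathbb{E}\|\delta\|^2\le\tfrac{\epsilon n}{2(1-\gamma)}$, $\mathbb{E}\|B\delta\|^2\le\tfrac{\epsilon r}{2(1-\gamma)}$, and $\mathbb{E}\|B\delta\|^4\lesssim\tfrac{\epsilon^2 r^2}{(1-\gamma)^2}$. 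For the mean, $f(x)=x$ gives $\mathbb{E}[v(X)]=v(x^*)$; combining with the exact second-order Taylor expansion $v(X)=v(x^*)+L\delta+R_2(\delta)$, where $R_2(\delta)=\tfrac12 A\,\mathrm{diag}(\sigma''(\xi))\,(B\delta)^{\odot 2}$ (elementwise square) satisfies $\|R_2(\delta)\|\le\tfrac{M}{2}\|B\delta\|^2$, yields $L\,\mathbb{E}[\delta]=-\mathbb{E}[R_2(\delta)]$, hence $\|m_\epsilon-x^*\|\le\|L^{-1}\|\tfrac{M}{2}\mathbb{E}\|B\delta\|^2\lesssim\tfrac{Mr\epsilon}{(1-\gamma)^2}$. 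The stated bound $\|m_\epsilon-x^*\|\le(\epsilon n/(1-\gamma))^{1/2}$ also follows immediately from $\mathbb{E}\|\delta\|^2$ and Jensen; the sharper $O(\epsilon)$ version is what is needed below to separate the covariance from the second moment.

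For the covariance I apply the identity to the matrix test function $f(x)=\delta\delta^T$, giving $\mathbb{E}[(v(X)-v(x^*))\delta^T]+\mathbb{E}[\delta(v(X)-v(x^*))^T]+\epsilon I=0$. Substituting the mean-value form $v(X)-v(x^*)=L_\delta\,\delta$ with $L_\delta=AD_\delta B-I$ and splitting $L_\delta=L+(L_\delta-L)$, the second-moment matrix $S=\mathbb{E}[\delta\delta^T]$ solves a perturbed Lyapunov equation $L(S/\epsilon-\omega)+(S/\epsilon-\omega)L^T=E$, with source $E=-\tfrac1\epsilon\,\mathbb{E}\big[(L_\delta-L)\delta\delta^T+\delta\delta^T(L_\delta-L)^T\big]$. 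Since $L$ is Hurwitz, $S/\epsilon-\omega=-\int_0^\infty e^{Lt}Ee^{L^Tt}\,dt$, so $\|S/\epsilon-\omega\|\le\|E\|/(2(1-\gamma))$. I bound $E$ using $(L_\delta-L)\delta=A(D_\delta-D_0)(B\delta)$ together with the elementwise estimate $|(D_\delta-D_0)_{jj}(B\delta)_j|\le M|(B\delta)_j|^2$, which gives $\|(L_\delta-L)\delta\,\delta^T\|\le M\|B\delta\|^2\|\delta\|$ and, by Cauchy--Schwarz, $\|E\|\le\tfrac{2M}{\epsilon}\sqrt{\mathbb{E}\|B\delta\|^4}\sqrt{\mathbb{E}\|\delta\|^2}\lesssim\tfrac{M\,\epsilon^{1/2}n^{1/2}\,\mathrm{poly}(r)}{(1-\gamma)^{3/2}}$. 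Finally $\Sigma_\epsilon=S-(m_\epsilon-x^*)(m_\epsilon-x^*)^T$, so $\|\Sigma_\epsilon/\epsilon-\omega\|\le\|S/\epsilon-\omega\|+\|m_\epsilon-x^*\|^2/\epsilon$, and by the $O(\epsilon)$ mean bound the second term is $O(\epsilon)$ and is absorbed for $\epsilon$ small, leaving a bound of the claimed order $\epsilon^{1/2}n^{1/2}\,\mathrm{poly}(r)\,M\,(1-\gamma)^{-O(1)}$; the exact exponents come from careful bookkeeping of the constants above.

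The main obstacle is the control of the Lyapunov source $E$: this is exactly where the argument must exploit the low-rank bottleneck, since bounding $\|E\|$ through $\mathbb{E}\|\delta\|^4$ (the naive route) would produce the wrong $n$-dependence. One must instead funnel the second-order remainder through $B\delta\in\mathbb{R}^r$ and show its second and fourth moments concentrate at scale $\epsilon r$ rather than $\epsilon n$, which in turn relies on the bottleneck contractivity $\langle B(v(x)-v(x^*)),B\delta\rangle\le-(1-\gamma)\|B\delta\|^2$ making the corresponding moment recursions close. A secondary technical point is justifying $\mathbb{E}[\mathcal{L}f]=0$ for the polynomial and matrix-valued $f$ used (integrability of the stationary measure, from the strong monotonicity tail bound) and cleanly distinguishing the second moment $S$ from the covariance $\Sigma_\epsilon$ via the refined $O(\epsilon)$ estimate on $\|m_\epsilon-x^*\|$.
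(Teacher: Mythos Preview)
Your argument is correct and follows the same overall strategy as the paper: stationary generator identities for moment control, a low-rank bottleneck to make the Taylor remainder scale with $r$ rather than $n$, and a perturbed Lyapunov equation bounded via the integral representation together with $\|e^{Lt}\|\le e^{-(1-\gamma)t}$ from $L+L^T\preceq-2(1-\gamma)I$. The tactical choices differ in two places. First, your bottleneck variable is $B\delta\in\mathbb{R}^r$ with the sharp remainder estimate $\|R_2(\delta)\|\le\tfrac{M}{2}\|B\delta\|^2$; the paper instead introduces the orthogonal projection $P$ onto $\operatorname{im}A\oplus(\ker B)^\perp$ (a $2r$-dimensional subspace), proves a contractivity moment lemma for $\|P(x-x^*)\|^{2j}$, and bounds the remainder by a multinomial Taylor expansion that picks up an extra factor $r^{3/2}$---so your bookkeeping, carried out carefully, actually gives a smaller $r$-exponent than the paper's $r^{5/2}$. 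Second, you center at $x^*$, work with $S=\mathbb{E}[\delta\delta^T]$, and then pass to $\Sigma_\epsilon$ using your refined $O(\epsilon)$ bound on $\|m_\epsilon-x^*\|$; the paper centers the second-moment identity at $m_\epsilon$ and uses the observation $\mathbb{E}[(x-m_\epsilon)(x-x^*)^T]=\Sigma_\epsilon$ to arrive at the covariance directly, so it never needs anything sharper than the $O(\epsilon^{1/2})$ mean bound. Both routes are equally valid; yours is a bit more elementary in the remainder step, while the paper's centering trick avoids the extra $S$-to-$\Sigma_\epsilon$ correction.
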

The proof is given in Appendix~\ref{sec:mean_cov}. This is one instance of perturbation theory for SDEs~\citep{gardiner2021elements,sanz2017gaussian}.  Equipped with this fact, one can consider access to the stationary distribution of an SDE, specifically the first and second-order moments, and inspect what happens now as the noise goes to zero.  Due to issues with scaling and permutation of $A$ and $B$ being unidentifiable from the model, we measure recovery using the signed permutation distance $d_P$.

\begin{theorem}\label{thm:nonlinear_quant}
    Under Assumption~\ref{ass:weak_nonlinear} and prior knowledge of $\sigma$, suppose we observe the first and second moments $m_\epsilon$ and $\Sigma_\epsilon / \epsilon$ of the stationary distribution of the SDE in Equation~\eqref{eq:sde}.  If we have $k \gtrsim r^2$ interventions and sufficiently large $n$, then with probability at least $1 - 2\exp(-\tilde{C}k^{2/3}) - 2\exp(-\tilde{C}n^{1/3})$, $d_P(\tilde{A}, A) \leq \epsilon^{1/2}(poly(n,r,k)$ and $d_P(\tilde{B}^T, B^T) \leq \epsilon^{1/2}(poly(n,r,k)$.
\end{theorem}

These polynomial dependencies treat all other problem parameters, i.e. $\gamma, \tau, M, \nu$ as constants.  Without prior knowledge of the activation, we can still guarantee identifiability but with a limiting bound instead:

\begin{theorem}\label{thm:nonlinear_upper}
    Under Assumption~\ref{ass:weak_nonlinear}, suppose we observe the first and second moments $m_\epsilon$ and $\Sigma_\epsilon / \epsilon$ of the stationary distribution of the SDE in Equation~\eqref{eq:sde}.
 Then if we have $k \gtrsim r^2$ interventions and sufficiently large $n$, then for any fixed $t$, the probability that $d_P(A, \tilde{A}) > t$ or $d_P(B, \tilde{B}) > t$ goes to $0$ as $\epsilon \rightarrow 0$.
\end{theorem}

We give a brief sketch of the proof, to capture the novel elements and how the identifiability of $A$ and $B$ appears here but not in the linear case.  The zero-noise limit approximately linearizes each intervened SDE.  Unlike the exactly linear case, the stationary covariance is not identical across interventions.  By Theorem~\ref{thm:mean_cov}, the covariance under the $i$th intervention approximately matches the SDE with drift $A(J\sigma(Bx_i^*))B$ where $x_i^*$ is the unique zero of the intervened drift $v(x) + c_i$.

Sufficient variability in these fixed points (guaranteed by our assumptions on $\sigma$) enables the identification of $A$ and $B$.  Suppose one had access to the drift matrices: because the Jacobian term in the drift is diagonal, a linear combination of drifts across $r$ interventions can yield a rank-one matrix, which must correspond to the outer product of a column of $A$ and a row of $B$.  But we don't have access to the drift matrices directly, only the covariances they induce, and so the proof relies on finding a different set of matrices.  The robust version of this argument is based on the tensor decomposition analysis in~\citet{bhaskara2014uniqueness}.

We hypothesize the constraint that the SDE drift have a single absorbing point is unnecessary.  If the ODE given by $\frac{dx}{dt} = v(x) + c_i$ had multiple stable equilibria, then in the limit of small noise, the SDE stationary distribution should approach a Gaussian mixture centered around each equilibrium point.  The proof only requires certain linear independence conditions of vectors induced by each fixed point, so one intervention with multiple equilibria would offer essentially the same information as additional unimodal interventions.  This is consistent with the observations in~\citet{lorch2024causal} that nonlinear SDE models generalize well with a limited number of interventions.

\section{Experiments}\label{sec:experiments}

To validate the theory given above, we consider experiments demonstrating the recovery of SDE parameters and generalization to unseen interventions.  To show the broad applicability of the theory, we consider multiple loss functions: a loss acting directly on the parameters in the linear case~\citep{rohbeck2024bicycle}, a kernelized Stein discrepancy~\citep{barp2019minimum}, and rollout loss for neural SDEs~\citep{kidger2021neural}.

\subsection{Linear SDE Recovery}

Because we are fitting linear SDEs where the stationary distribution is Gaussian, we can choose a very simple loss that matches the population mean and covariance of each intervented distribution, to verify the claim of identifiability (Theorem~\ref{thm: linear_tight}).  This loss is akin to the one used in the Bicycle method introduced in~\citet{rohbeck2024bicycle} that fits interventional linear SDEs.  Assuming the noise scale $\epsilon$ is known and $L$ denotes the true drift, we fit the parameters $\hat{A}$, $\hat{B}$, and $\hat{D}$.  Letting our estimate for the drift be denoted by $\hat{L} := \hat{A}\hat{B} - \hat{D}$, the loss function matches the mean and covariance of each intervention:
\begin{equation}\label{eq:lin_loss}
    \mathcal{L}_{lin}(\hat{A}, \hat{B}, \hat{D}) = \|\hat{L}^{-1}C - L^{-1}C\|_F + \|\hat{L} \omega + \omega \hat{L}^T + \epsilon I\|_F.
\end{equation}

We evaluate the recovery of $AB$ under different numbers of interventions for two different ambient dimensions $n$ and two different true ranks $r$ in Figure~\ref{fig:linearsde}.  For each sampled SDE, we use the best train error from 100 independent initializations to deal with the nonconvexity of the objective.  The only exception is the oversampled $k=r\log(n)$ where we need only 5 initializations, as the landscape is empirically easier to learn.  Further details are given in Section~\ref{sec:experimental_details_appendix}.

We confirm our theory: when the decay is known, $r-2$ interventions fails to grant identifiability with poor performance and extremely high variance, even with many independent initializations, while $r$ interventions clearly succeed. When the decay isn't known, we use a modest oversampling of $r\log(n)$ interventions.  This amount is informed by the literature on matrix completion, where $\Omega(rn\log(n))$ revealed matrix entries were proven to be necessary for a rank $r$ matrix recovery problem~\citep{candes2010power}. We scale down by a factor of $n$ because in our setting each intervention yields the mean vector of the perturbed linear SDE in $\mathbb{R}^n$, hence $n$ measurements.

\begin{figure*}[h]
     \centering
     \begin{subfigure}[b]{0.3\textwidth}
         \centering
         \includegraphics[width=\textwidth]{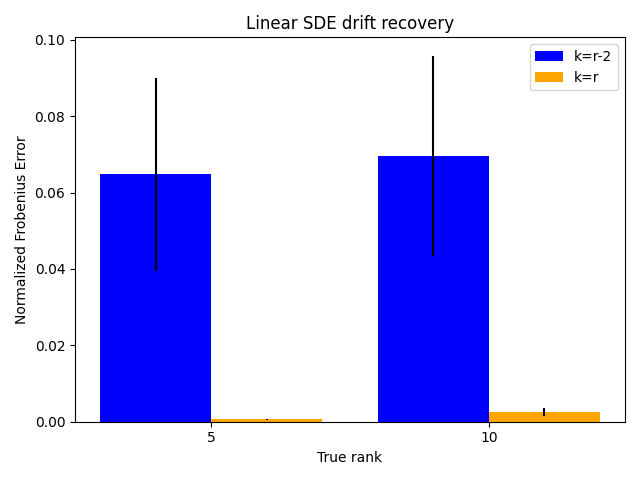}
         \caption{$n=100$ with fixed decay}
         \label{fig:linear100}
     \end{subfigure}
     \hspace{0.05\textwidth}
     \begin{subfigure}[b]{0.3\textwidth}
         \centering
         \includegraphics[width=\textwidth]{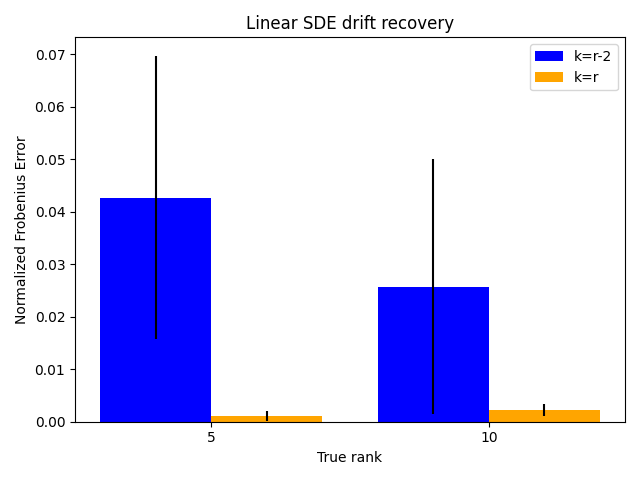}
         \caption{$n=200$ with fixed decay}
         \label{fig:linear200}
     \end{subfigure}
     
     \begin{subfigure}[b]{0.3\textwidth}
         \centering
         \includegraphics[width=\textwidth]{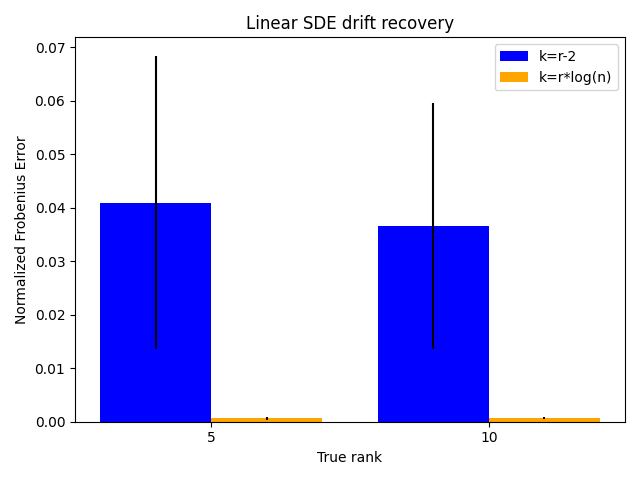}
         \caption{$n=100$ with learned decay}
         \label{fig:linear100_False}
     \end{subfigure}
     \hspace{0.05\textwidth}
     \begin{subfigure}[b]{0.3\textwidth}
         \centering
         \includegraphics[width=\textwidth]{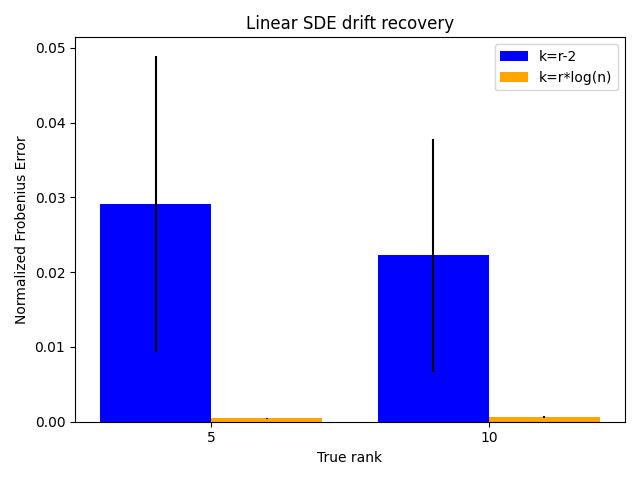}
         \caption{$n=200$ with learned decay}
         \label{fig:linear200_False}
     \end{subfigure}
        \caption{Normalized Frobenius error of learned drift against true drift in linear SDEs with $k$ independent Gaussian interventions.  Error bars are standard deviation over 5 independent runs.}
        \label{fig:linearsde}
\end{figure*}

\subsection{Nonlinear SDE Recovery}\label{sec:non_linear_sde_recovery}

We repeat this verification of identifiability in the setting of Theorem~\ref{thm:nonlinear_quant}, using a very small value of $\epsilon=10^{-5}$ to approximately linearize.  For some randomly chosen $A$ and $B$, we define the true drift $v(x) = A\sigma(Bx) - x$ and use each intervened drift $v + c_i$ to calculate the corresponding sample mean $m_\epsilon^i$ and sample covariance $\Sigma_\epsilon^i$.  

We define a parameterized drift of the form $\hat{v}(x) = \hat{A}\sigma(\hat{B}x) - x$ where $\sigma(x) = 0.7x + \tanh(x)$ as this satisfies the Assumption~\ref{ass:weak_nonlinear}.  We train with the loss function

\begin{align*}
    \mathcal{L}_{nonlin}(\hat{A}, \hat{B}) = \sqrt{\sum_{i=1}^k \left\|\hat{v}(m_\epsilon^i) + c_i\right\|^2} \\ + \sqrt{\sum_{i=1}^k \left\|J\hat{v}(m_\epsilon^i)(\Sigma_\epsilon^i / \epsilon) + (\Sigma_\epsilon^i / \epsilon) J\hat{v}(m_\epsilon^i)^T + I\right\|^2}
\end{align*}

The motivation for this loss comes from Theorem~\ref{thm:mean_cov}.  The first term enforces the approximate fixed point property of the mean, and the second term enforces the approximate Lyapunov equation of the covariance.
Thus we can train $\hat{v}$ to match the first and second moments of the true stationary distribution of $v$, which will be approximately Gaussian for sufficiently small $\epsilon$.

To empirically evaluate identifiability, we calculate the normalized error in recovering $A$ and $B$ from the parameters $\hat{A}$ and $\hat{B}$, chosen from the run with minimum training loss across 100 independent runs.  Further details are given in the Appendix.  We observe that $r^2$ interventions enable consistently good recovery of the underlying drift parameters in the known activation setting.

\begin{table}[h]
 \caption{Normalized Frobenius error of drift parameters with $k$ independent Gaussian interventions, over 10 independent runs.}
 \label{tab:non_linear_results}
\centering
 \begin{tabular}{|c|c|c|} 
 \hline
 & $A$ Error & $B$ Error \\
  \hline
 $k=r^2$ &$0.03 \pm 0.02$ & $0.03 \pm 0.01$ \\
 \hline
 \end{tabular}
\end{table}

\subsection{Synthetic Nonlinear SDE Generalization}

For an evaluation of Theorem~\ref{thm:nonlinear_upper}, we apply the insight that identifiability doesn't require knowing $\sigma$ and consider learnable MLPs.  Learnable activations~\citep{goyal2019learning} have been proposed before~\citep{apicella2019simple} but to our knowledge not previously applied to SDE parameterizations.

To assess generalizability, we consider a loss function proposed by~\citet{lorch2024causal}: the kernel deviation from stationarity (KDS), or equivalently the kernelized Stein discrepancy of the stationary distribution~\citep{barp2019minimum}.  When the parametric drift is defined as $v_\theta(x) = A\sigma(Bx) - Dx$ and the target stationary distribution is $\mu$, the KDS is,
\begin{equation}\label{eq:kds_loss}
    \mathcal{L}_{KDS}(\theta) = E_{x\sim\mu}E_{x'\sim\mu} \mathcal{A}_x \mathcal{A}_{x'} k(x,x').
\end{equation}
where $k$ is a given kernel and $\mathcal{A}_x$ is the SDE generator acting on the variable $x$ (for more details see~\citet{lorch2024causal}).  We use samples from the true stationary distribution $\mu$ to approximate these expectations, further training details are given in the Appendix.  This loss is nevertheless unsuitable when using noise small enough to reach the limit proven in Theorem~\ref{thm:nonlinear_upper}, and therefore we assess generalizability according to performance on unseen interventions.

We follow~\citet{zhang2023active,lorch2024causal} and evaluate with the mean squared error (MSE) between the true and predicted distribution.  Further details are given in Section~\ref{app:exp_det_nonlin}.  The results are given in Table~\ref{tab:kds_results}. We observe a clear benefit for low noise SDEs, which gets smaller as the noise gets bigger and further away from our proven settings.  This is consistent with our theory, and also intuitive: as the noise gets larger, the stationary distribution gets smoother and has less dependence on the intervention vectors.  Nevertheless, in the low noise regime we see that learnable activations improve prediction on test interventions without overfitting.

\begin{table}[h]
 \caption{Mean distance between true and predicted distribution with $n=20$, $r=3$, over 20 independent seeds with 20 test interventions per seed, with sigmoid activation versus learned activation.}
 \label{tab:kds_results}
\centering
 \begin{tabular}{|c|c|c|} 
 \hline
 & Sigmoid & Activation MLP \\
  \hline
 $\epsilon=0.05$ & $21.78 \pm 12.78$ & $\bm{9.51 \pm 1.28}$\\
 \hline
 $\epsilon=0.1$ & $16.54 \pm 6.35$ & $\bm{9.23 \pm 1.65}$\\
 \hline
 $\epsilon=0.2$ & $11.53 \pm 3.47$ & $\bm{7.96 \pm 1.44}$\\
 \hline
 $\epsilon=0.3$ & $7.97 \pm 2.19$ & $\bm{6.69 \pm 2.07}$\\
 \hline
 \end{tabular}
\end{table}

\subsection{Simulated GRN SDE Generalization}\label{sec:grn-recovery}

\begin{figure*}[ht!]
     \centering
     \includegraphics[width=0.5\textwidth]{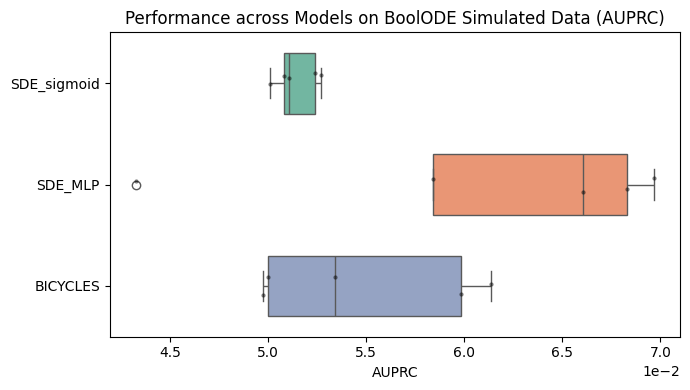}
    \caption{Gene regulatory network recovery on three tested SDE models for 5 independent runs.}
    \label{fig:grn_recovery}
\end{figure*}

We consider an applied setting for learnable activations, and how they impact the recovery of GRNs.  In the genomics context, $n$ is the number of expressed genes in the thousands, and $r$ corresponds to a much smaller number of latent gene modules/pathways.  Because there are few fully known regulatory networks we rely on semi-synthetic data to produce data with similar characteristics to true perturbed transcriptomic samples and a ground-truth GRN.

We use the PerturbODE model from~\citet{lin2025interpretable}, similar to our setup and modified to handle a loss for parameterized SDEs~\citep{kidger2021neural}.  We consider the inferred GRN as the matrix multiplication of the weight matrices in the MLP that parameterizes the drift, corresponding to a first-order Taylor approximation of the dynamics.  GRN recovery can be measured as a classification task of individual edges.  As a baseline, we also consider the Bicycle interventional linear SDE method~\citep{rohbeck2024bicycle}.  To simulate a known GRN, we use the boolean ODE based-simulator BEELINE~\citep{pratapa2020benchmarking}. BEELINE uses the provided GRN to parametrize an SDE in the space of mRNA expression and protein expression, with physically plausible regulation dynamics.  More precise details are given in the Appendix in Section~\ref{app:exp_det_beeline}. 

GRN recovery is still extremely difficult (random guessing is $\approx 0.0586$ and maximum AUPRC for dynamical methods $\approx 0.07$), but we observe improvement using nonlinear SDEs over linear SDEs, and substantial improvement using learnable activation in the SDE parameterization (Figure~\ref{fig:grn_recovery}). 

\section{Discussion}\label{sec:discussion}

We confirm identifiability directly in the linear case and non-linear, low-noise limit, and see improved generalization using learned activations for small noise.  It may be beneficial to increase the capacity of the SDE beyond two-layer neural networks, if the extracted drift were still identifiable.

We note some limitations of the given theory.  The results demonstrate identifiability in a setup with infinitesimal small noise and restrictions on the possible drift functions.  Generalizing to larger noise regimes may be possible with higher order moment information about the stationary distribution, although understanding the stationary distribution outside of linearization is very challenging.  Broader parameterizations of the drift or interventions are interesting extensions for future work.
\section{Conclusion}

In this work, we've given the first provable results regarding identifiability of interventional SDEs, in both the linear and nonlinear case.  Although such models are currently less popular and less studied then comparable SCM-based modeling, the ability to obtain provable guarantees for dynamical systems without any temporal data suggests they may become more fruitful for causal inference in the future.  As longitudinal genomics datasets often have a small number of time-points, future work may consider how to incorporate sparse trajectory information with the stationary distribution for better identifiability guarantees or more effective architectures for recovering regulatory networks.

\begin{acknowledgements} 
This work was made possible by support from the MacMillan Family and the MacMillan Center for the Study of the Non-Coding Cancer Genome at the New York Genome Center. A.Z. is the Sijbrandij Foundation Quantitative Biology Fellow of the Damon Runyon Cancer Research Foundation (DRQ26-25). This work was also supported by the NIH NHGRI grant R01HG012875, and grant number 2022-253560 from the Chan Zuckerberg Initiative DAF, an advised fund of Silicon Valley Community Foundation.
\end{acknowledgements}

\bibliography{uai2026-template}

\newpage

\onecolumn

\title{Towards Identifiability of Interventional Stochastic Differential Equations\\(Supplementary Material)}

\maketitle

\section{Proofs of Results}\label{sec:proof_of_results_appendix}
\subsection*{Proof of Proposition~\ref{prop:linear_deterministic}} \label{sec:linear_deterministic}
Consider block matrices such that,
    \begin{align}
    A &= \left[\begin{array}{ c | c }
    \frac{1}{\sqrt{2}} I_r & 0 \\
    \hline
    0 & 0
  \end{array}\right] \\
  B &= \left[\begin{array}{ c | c }
    \frac{1}{\sqrt{2}} I_r & 0 \\
    \hline
    0 & 0
  \end{array}\right] 
\end{align}
and set $D = I$ and $\epsilon = 1$. If we also set
\begin{align}
        \omega &= \left[\begin{array}{ c | c }
    I_r & 0 \\
    \hline
    0 & \frac{1}{2} I_{n-r}
  \end{array}\right] 
\end{align}

then it is straightforward to confirm that the drift matrix $L = AB - I$ satisfies the Lyapunov equation

\begin{align}
    L\omega + \omega L^T + I = 0
\end{align}

Now, consider any skew-symmetric matrix $Q$ that is only supported in the top left $r \times r$ block.  Then $AB + Q \omega^{-1}$ will only be supported in the top left block and therefore still have rank $r$.  Furthermore, $Q$ obeys the trivial Lyapunov equation $Q\omega + \omega Q^T = 0$, so we have that the drift matrix $\hat{L} := AB + Q \omega^{-1} - I$ also satisfies the Lyapunov equation $\hat{L}\omega + \omega \hat{L}^T + I = 0$.  Choosing the norm of $Q$ small enough guarantees that $\hat{L}$ is still Hurwitz, while choosing $Q \neq 0$ guarantees $L \neq \hat{L}$.

In the worst case, every intervention is of the form $e_i$ for $i > r$.  The block structure and Woodbury matrix identity imply that $(AB + Q \omega^{-1} - I)^{-1}e_i = -e_i$ for any $Q$ chosen as above, and therefore $-\hat{L}^{-1}e_i = -L^{-1}e_i$.

We conclude that for each intervention $e_i$ with $i>r$, by Theorem~\ref{thm:sarkka-linear} the drift matrices $L$ and $\hat{L}$ both induce identical stationary distributions.  Thus, even with $n-r$ interventions, the drift is not identifiable.

\subsection*{Proof of Theorem~\ref{thm: linear_tight}} \label{sec:linear_tight}

We show separately the upper and lower bounds for number of interventions needed for almost sure identifiability.

\begin{theorem}
    Suppose the true $A$ and $B$ are sampled according to Assumption~\ref{ass:sampling}, with the constraint that a.s. $\|A\|, \|B\| \leq \sqrt{\gamma}$.  Then $L = AB - D$ is a.s. identifiable.
\end{theorem}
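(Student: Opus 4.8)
The plan is to show that, given the known decay $D$ and the known interventions, no competing pair $(\hat A,\hat B)$ with $\operatorname{rank}(\hat A\hat B)\le r$ can reproduce all $r$ observed stationary laws unless it induces the same drift as $(A,B)$. By Theorem~\ref{thm:sarkka-linear}, under intervention $c_i$ the observed law is $\mathcal{N}(-L^{-1}c_i,\omega)$ with $L=AB-D$ and $\omega\succ0$ the solution of $L\omega+\omega L^T+\epsilon I=0$. Fix any admissible $A,B,D$: a.s.\ $A$ and $B$ are full rank, so $W:=AB$ has rank exactly $r$ and $\omega$ is invertible; we argue a.s.\ over the remaining randomness $C$. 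Suppose $\hat L=\hat A\hat B-D$ is Hurwitz, has $\operatorname{rank}(\hat A\hat B)\le r$, and reproduces all observed means and the common covariance $\omega$; put $E:=L-\hat L=AB-\hat A\hat B$, where $D$ cancels since it is known. Subtracting the two Lyapunov equations gives $E\omega+\omega E^T=0$, so $E\omega$ is skew-symmetric. Equating the means $-L^{-1}c_i=-\hat L^{-1}c_i$, using $L^{-1}-\hat L^{-1}=-L^{-1}E\hat L^{-1}$ and $\hat L^{-1}c_i=-m_i$ (the observed mean), yields $Em_i=0$ for every $i$, where $m_i:=-L^{-1}c_i$. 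It remains to deduce $E=0$ a.s.

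The key step is to right-multiply everything by $\omega$. Set $G:=W\omega$, $\hat G:=\hat A\hat B\,\omega$, and $K:=E\omega=G-\hat G$; then $\operatorname{rank} G=r$, $\operatorname{rank}\hat G\le r$, $K$ is skew-symmetric, and $G,\hat G$ share the same symmetric part $\tfrac12(D\omega+\omega D-\epsilon I)$ (from the Lyapunov equation). With $n_i:=\omega^{-1}m_i=-\omega^{-1}L^{-1}c_i$, the mean relations read $Kn_i=0$, i.e.\ $\hat G n_i=G n_i$ for $i=1,\dots,r$. Let $\mathcal{N}:=\operatorname{span}(n_1,\dots,n_r)$. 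Since $\omega^{-1}L^{-1}$ is a fixed invertible matrix and the $c_i$ are i.i.d.\ from a density, a.s.\ $\dim\mathcal{N}=r$ and $G$ is injective on $\mathcal{N}$ (each is the non-vanishing of a fixed nonzero polynomial in the entries of $C$). Then $G(\mathcal{N})$ is an $r$-dimensional subspace of $\operatorname{col} G$, hence equals $\operatorname{col} G$. As $\hat G$ agrees with $G$ on $\mathcal{N}$ we get $\operatorname{col}\hat G\supseteq\hat G(\mathcal{N})=\operatorname{col} G$, and since $\operatorname{rank}\hat G\le r=\dim\operatorname{col} G$ this forces $\operatorname{col}\hat G=\operatorname{col} G$, whence $\operatorname{col} K\subseteq\operatorname{col} G$. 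On the other hand $K$ vanishes on $\mathcal{N}$, and skew-symmetry gives $\ker K=(\operatorname{col} K)^\perp$, so $\operatorname{col} K\subseteq\mathcal{N}^\perp$. Therefore $\operatorname{col} K\subseteq\operatorname{col} G\cap\mathcal{N}^\perp$.

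Finally, $\operatorname{col} G=\operatorname{col}(W\omega)=\operatorname{col} A$ is a fixed $r$-dimensional subspace while $\mathcal{N}=\omega^{-1}L^{-1}\operatorname{span}(c_1,\dots,c_r)$, and one checks $\operatorname{col} G\cap\mathcal{N}^\perp=\{0\}$ a.s.: a nonzero $v\in\operatorname{col} A$ in $\mathcal{N}^\perp$ would satisfy $\langle (\omega^{-1}L^{-1})^T v,\,c_i\rangle=0$ for all $i$, and as $v$ runs over $\operatorname{col} A$ the vector $(\omega^{-1}L^{-1})^T v$ runs over a fixed $r$-dimensional subspace, so such a $v$ exists iff an $r\times r$ matrix built from a basis of that subspace and $c_1,\dots,c_r$ is singular --- an event cut out by the vanishing of a nonzero polynomial in $C$, hence of probability zero. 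Thus $K=0$, $E=0$, $\hat L=L$, and the drift is a.s.\ identifiable from $r$ interventions. The crux, and the main obstacle, is that $(\hat A,\hat B)$ may be chosen adversarially, so the admissible $E$ cannot be controlled by a naive dimension count --- e.g.\ they already include the linear family $\{E:E\omega\text{ skew},\ \operatorname{col}(E\omega)\subseteq\operatorname{col} A\}$, of dimension $\binom{r}{2}$, which exceeds $2r$ once $r\ge6$. The resolution is the observation that the $r$ mean constraints pin $\operatorname{col}\hat G$ down to $\operatorname{col} G$, collapsing this freedom; this is precisely where exactly $r$ (not fewer) interventions enter, since with $k<r$ we only get $\dim\mathcal{N}=k<r$ and $G(\mathcal{N})$ need not fill $\operatorname{col} G$ --- the same mechanism underlying the matching lower bound.
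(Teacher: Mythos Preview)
Your proof is correct and takes a genuinely different route from the paper's. The paper argues \emph{constructively}: it first recovers $\im A$ from the observed means via the Woodbury expansion of $-L^{-1}C$, then right-multiplies the Lyapunov equation by $P_A^\perp$ (using $A^TP_A^\perp=0$ and $D=D^T$) to obtain an explicit formula $\omega P_A^\perp = L^{-1}(\omega D-\epsilon I)P_A^\perp$, which yields $L^{-1}$ on an $(n-r)$-dimensional subspace; the $r$ observed means then complete $L^{-1}$ on all of $\mathbb{R}^n$. Your argument is instead a \emph{uniqueness} argument: you posit a competitor $\hat L$, reduce to showing the skew-symmetric $K=E\omega$ vanishes, and exploit the rank-$r$ constraint on both $G=AB\omega$ and $\hat G=\hat A\hat B\,\omega$ together with the $r$ mean relations to squeeze $\operatorname{col} K$ into $\operatorname{col} A\cap\mathcal{N}^\perp=\{0\}$.

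What each approach buys: the paper's route directly hands you an estimator and makes transparent which observable quantities are consumed at each step. Your route, while non-constructive, gives a cleaner structural account of why exactly $r$ interventions are the threshold --- the pinning step $G(\mathcal{N})=\operatorname{col} G$ (and hence $\operatorname{col}\hat G=\operatorname{col} G$) fails as soon as $\dim\mathcal{N}<r$, and your closing remark correctly identifies the surviving skew-symmetric family $\{K\text{ skew}:\operatorname{col} K\subseteq\operatorname{col} A\}$ as precisely the mechanism behind the paper's $r-2$ lower bound. The two proofs also differ in how they use $D$: the paper exploits the explicit identity $L^TP_A^\perp=-DP_A^\perp$, whereas you only need that $D$ is common to $L$ and $\hat L$ so that it cancels in $E$, making your argument insensitive to the structure of $D$ beyond its being known.
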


\begin{proof}

Recall that $C = \left[c_1, \dots, c_r \right]$ is the matrix of interventions as columns. The means of the interventional stationary distributions are given by,
\begin{align}
    -L^{-1}C = (D^{-1} + D^{-1}A(I - BD^{-1}A)^{-1}BD^{-1})C
\end{align}
So with knowledge of $D$ and $C$, we can calculate  $A(I - BD^{-1}A)^{-1}BD^{-1}C$. This matrix is a.s. full rank, and its range is equal to the range of $A$.  Hence we can infer $P := P_A^\perp$. 

Now, using the Lyapunov equation, we have,
\begin{align}
    0 &= L\omega P + \omega L^T P + \epsilon P \\
    &= L\omega P - \omega D P + \epsilon P
\end{align}
Rearranging,
\begin{align}
    \omega P &= L^{-1}(\omega D - \epsilon I)P
\end{align}
and $(\omega D - \epsilon I)P$ is rank $n-r$, so a.s. $\im (\omega D - \epsilon I)P \oplus \im C$ is $n$ dimensional.  Hence we recover $L^{-1}$ and therefore $L$.

\end{proof}

\begin{theorem}
    Sample $A \in \mathbb{R}^{n \times r}$ and $B \in \mathbb{R}^{r \times n}$ according to Assumption~\ref{ass:sampling}.  Then a.s. $AB$ is not identifiable with $r-2$ interventions.
\end{theorem}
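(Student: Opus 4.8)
The plan is to show that, almost surely over the sampled $(A,B)$ and the $r-2$ intervention columns $C=[c_1,\dots,c_{r-2}]$, one can exhibit a second admissible parameterization $(\hat A,\hat B,D)$ with $\hat A\hat B\neq AB$ that induces exactly the same interventional stationary distributions. Since $\|AB\|<1$ and $D\succeq I$, the drift $L:=AB-D$ is Hurwitz, so by Theorem~\ref{thm:sarkka-linear} intervention $c_i$ produces $\mathcal N(-L^{-1}c_i,\omega)$ where $\omega\succ 0$ solves $L\omega+\omega L^T+\epsilon I=0$. Hence $(\hat A,\hat B)$ is observationally indistinguishable from $(A,B)$ under these interventions iff $\hat L:=\hat A\hat B-D$ satisfies $\hat L\omega+\omega\hat L^T+\epsilon I=0$ and $\hat L^{-1}c_i=L^{-1}c_i$ for $i\leq r-2$, while $\hat L+D=\hat A\hat B$ has rank at most $r$ and $\hat L$ remains Hurwitz with $\|\hat A\hat B\|<1$.

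Following the computation in the proof of Proposition~\ref{prop:linear_deterministic}, the Lyapunov condition holds precisely for $\hat L=L+Q\omega^{-1}$ with $Q$ skew-symmetric. I would then restrict to $Q=ASA^T$ with $S\in\mathbb R^{r\times r}$ skew-symmetric: such $Q$ is skew-symmetric and has $\im Q\subseteq\im A$, so (since $\omega^{-1}$ is invertible) $\im(\hat L+D)=\im(AB+Q\omega^{-1})\subseteq\im A$ and $\hat L+D$ has rank at most $r$. For the mean condition, write $\hat L^{-1}c_i-L^{-1}c_i=-\hat L^{-1}Q\omega^{-1}L^{-1}c_i$; since $\hat L^{-1}$ is invertible and $A$ is a.s. full column rank, this vanishes for all $i$ iff $Sf_i=0$ for $i=1,\dots,r-2$, where $f_i:=A^T\omega^{-1}L^{-1}c_i\in\mathbb R^r$. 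Thus it suffices to produce a nonzero skew-symmetric $S$ annihilating $f_1,\dots,f_{r-2}$; rescaling $S$ (hence $Q$) by a small $t>0$ keeps $\hat L$ Hurwitz and $\|AB+tQ\omega^{-1}\|<1$ by continuity, so $AB+tQ\omega^{-1}$ factors by SVD as $\hat A\hat B$ in the admissible class, with $\hat A\hat B\neq AB$ since $t\neq 0$ and $Q\neq 0$.

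The crux is this last claim, and it is also where a naive parameter count breaks: $\dim\mathrm{Skew}(r)=\binom r2$ is smaller than the $(r-2)r$ scalar equations $Sf_i=0$ once $r\geq 3$, so one must exploit that a skew-symmetric matrix has even rank. It then suffices to find a \emph{rank-two} skew-symmetric $S$ with $\{f_1,\dots,f_{r-2}\}\subseteq\ker S$: if $W:=\mathrm{span}(f_1,\dots,f_{r-2})$ has dimension $r-2$, then $W^\perp$ has dimension exactly $2$, and for any basis $u,v$ of $W^\perp$ the matrix $S=uv^T-vu^T$ is nonzero, skew-symmetric, and kills $W$. It remains to verify the a.s. statements: $A,B$ are a.s. full rank by Assumption~\ref{ass:sampling}, and since $A^T\omega^{-1}L^{-1}$ is a.s. a surjection onto $\mathbb R^r$, each $f_i$ is the image of $c_i$ (drawn from a density on $\mathbb R^n$) and hence has a density on $\mathbb R^r$, so the $r-2$ independent vectors $f_i$ are a.s. linearly independent and $\dim W=r-2$. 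The same reasoning transparently shows why the bound is $r-2$ rather than $r-1$: with $r-1$ interventions $\dim W^\perp=1$, and there is no nonzero skew-symmetric matrix of rank $1$, consistent with the matching $r$-intervention upper bound. The only routine residual work is the Hurwitz/norm perturbation estimates for small $t$ and the measure-zero arguments just sketched.
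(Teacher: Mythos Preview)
Your proposal is correct and follows essentially the same strategy as the paper: perturb the drift by a skew-symmetric term so the Lyapunov equation is preserved, confine it to the range of $A$ so the rank-$r$ constraint survives, and pick a rank-two skew matrix whose kernel contains the $(r-2)$-dimensional span coming from the mean constraints. The only cosmetic differences are that the paper writes the perturbation as $\hat L=L+\omega Q$ with $Q=B^TA^TQ^*AB$ (building the rank-two skew piece $Q^*=uv^T-vu^T$ in $\mathbb{R}^n$, with $u,v$ orthogonal to both $\ker A^T$ and $\im(ABL^{-1}C)$) and fixes $D=I$, whereas you use $\hat L=L+Q\omega^{-1}$ with $Q=ASA^T$ and work directly in $\mathbb{R}^r$; your remark on why $r-1$ interventions escape this construction is a nice addition not in the paper.
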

\begin{proof}
    Assume the decay is fixed at $D = I$, so $L = AB - I$ with induced covariance $\omega$.  With $r-2$ interventions, $ABL^{-1}C$ is at most rank $r-2$, and by assumption the kernel of $A^T$ is at most dimension $n-r$, so there is guaranteed to be a two dimensional subspace orthogonal to the previous spaces, say with basis vectors $u$ and $v$.  Let $Q^* = uv^T - vu^T$.

    Then we consider $\hat{L} = L + \omega Q$ where $Q = B^TA^TQ^*AB$, and claim that $\hat{L}$ is a valid, distinct drift matrix that generates the same data.

    First, note that $u, v$ are orthogonal to the kernel of $A^T$ and $B^T$ is full-rank a.s., so $B^TA^TQ^*$ is non-zero.  Transposing and applying the same reasoning, we get that $Q = B^TA^TQ^*AB \neq 0$.  Scaling down the magnitude of $Q^*$ if necessary, we have that $\hat{L}$ is Hurwitz and distinct from $L$.

    Second, note that $\hat{L} = (I + \omega B^TA^TQ^*)AB - I$, so it satisfies the rank $r$ constraint on the non-diagonal part.

    Now we show it agrees on the induced stationary distributions.  Again from the choice of the two dimensional subspace, we have that $Q^* AB L^{-1}C = 0$, which implies $QL^{-1}C = 0$.
    
    From the Woodbury matrix identity, we see the means of all stationary distributions are conserved,
    \begin{align}
        \hat{L}^{-1}C &= (L+\omega Q)^{-1}C \\
        &= \left[L^{-1} - L^{-1}\omega(I + QL^{-1}\omega)^{-1}QL^{-1} \right]C \\
        &= L^{-1}C
    \end{align}
    Finally, note that by antisymmetry of $Q$,
    \begin{align}
        \hat{L}\omega + \omega \hat{L}^T &= L\omega + \omega L^T + \omega Q \omega + \omega Q^T \omega \\
        &= L\omega + \omega L^T
    \end{align}
    which implies $\hat{L}$ satisfies the same Lyapunov equation and therefore induces the same covariance $\omega$.
\end{proof}

\subsection*{Proof of Proposition~\ref{prop:ode_failure}} \label{sec:ode_failure}
    In the zero-noise setting, the SDE reduces to an ODE, and in the limit as time goes to infinity, the stationary distribution is simply a point mass on the unique fixed point of the drift. Therefore, one only gets access to the stationary points $x_i^*$ for each vector field $v + c_i$: other higher moments are not defined. 

    Suppose we have $n-r-1$ interventions. Since $A$ has $r$ columns, there must be a vector $u$ in $\mathbb{R}^n$ orthogonal to the columns of $A$ and $C$.

    Let $\hat{B} = B + bu^T$ for any non-trivial $b \in \mathbb{R}^r$.  We claim $\hat{B}$ induces the same data as $B$.  First, note that because $u$ is orthogonal to every $c_i$, and $x_i^* - c_i$ is in the image of $A$, it follows $u$ is also orthogonal to each $x_i^*$.

    Thus, one can confirm that $\hat{B}x_i^* = Bx_i^*$, so the fixed points don't change.  But $\hat{B} \neq B$, and they are clearly not equivalent up to permutation or scaling invariance.

\subsection*{Proof of Theorem~\ref{thm:mean_cov}}
\label{sec:mean_cov}
We will require two lemmas.

\begin{lemma}\label{lem:norm_bound}
    Consider a vector field $v(x) = A\sigma(Bx) - x + c$ with $A$, $B^T$ in the Stiefel manifold of shape $n \times r$, $\|\sigma'\|_\infty = \gamma < 1$. Let $p$ be the stationary distribution of our usual SDE, $x^*$ be the unique stationary point of $v$, and $P$ be an orthogonal projection such that $BP = B$ and $PA = A$.  Assume $2(j-1) \leq Tr(P)$, then,
    \begin{align}
        E_p\left[\|P(x-x^*)\|^{2j}\right] &\leq \left(\frac{\epsilon Tr(P)}{1-\gamma}\right)^j.
    \end{align}
\end{lemma}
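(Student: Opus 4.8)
The plan is to run a Lyapunov-type moment estimate through the infinitesimal generator $\mathcal{A}f = \langle v, \nabla f\rangle + \tfrac{\epsilon}{2}\Delta f$ of the SDE, using that under the stationary law $p$ one has $E_p[\mathcal{A}f] = 0$ for test functions $f$ of polynomial growth. Concretely, I would apply this to $f(x) = g(x)^j$ where $g(x) := \|P(x-x^*)\|^2 = (x-x^*)^\top P (x-x^*)$, derive a one-step recursion bounding $E_p[g^j]$ in terms of $E_p[g^{j-1}]$, and then iterate down to the trivial base case $E_p[g^0]=1$.

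The main computation is exact. Because $P$ is an orthogonal projection ($P=P^\top=P^2$), one has $\nabla g = 2P(x-x^*)$, $\|\nabla g\|^2 = 4g$, and $\Delta g = 2\,Tr(P)$; differentiating $f=g^j$ then gives $\nabla f = 2j\,g^{j-1}P(x-x^*)$ and $\Delta f = 2j\big(2(j-1)+Tr(P)\big)g^{j-1}$. The drift term reads $\langle v,\nabla f\rangle = 2j\,g^{j-1}\langle v(x),P(x-x^*)\rangle$, so the crux is the one-sided inequality $\langle v(x), P(x-x^*)\rangle \le -(1-\gamma)\,g(x)$. To prove it I would use $v(x^*)=0$ to replace $v(x)$ by $v(x)-v(x^*) = A(\sigma(Bx)-\sigma(Bx^*)) - (x-x^*)$; the $-(x-x^*)$ piece contributes exactly $-\langle x-x^*,P(x-x^*)\rangle = -g(x)$, and for the other piece the componentwise mean value theorem gives $\sigma(Bx)-\sigma(Bx^*) = D\,B(x-x^*)$ with $D$ diagonal (its entries values of $\sigma_i'$) and $\|D\|\le\gamma$, which combined with $BP=B$ yields $\langle A D B P(x-x^*), P(x-x^*)\rangle \le \|A\|\,\|D\|\,\|B\|\,\|P(x-x^*)\|^2 \le \gamma g(x)$ by Cauchy--Schwarz (using $\|A\|,\|B\|\le1$). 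Summing gives the claimed bound, hence $\mathcal{A}f \le -2j(1-\gamma)g^j + \epsilon j\big(2(j-1)+Tr(P)\big)g^{j-1}$.

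Taking $E_p$ of this inequality, using $E_p[\mathcal{A}f]=0$, dividing by $2j(1-\gamma)$, and invoking the hypothesis $2(j-1)\le Tr(P)$ to bound $2(j-1)+Tr(P)\le 2\,Tr(P)$ gives $E_p[g^j]\le \tfrac{\epsilon\, Tr(P)}{1-\gamma}E_p[g^{j-1}]$. Since the hypothesis also holds for every smaller index ($2(j'-1)\le 2(j-1)\le Tr(P)$), the same recursion applies at every level, and iterating from $E_p[g^0]=1$ yields $E_p[g^j]\le\big(\tfrac{\epsilon\,Tr(P)}{1-\gamma}\big)^j$, which is the statement. The one genuinely delicate point is justifying $E_p[\mathcal{A}f]=0$: $f$ is an unbounded polynomial, so a priori neither $E_p[g^j]<\infty$ nor the martingale property in It\^o's formula is given. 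I would handle this by first establishing that $p$ has all polynomial moments --- which follows from the global contractivity $\langle v(x), x-x^*\rangle \le -(1-\gamma)\|x-x^*\|^2$ via a standard Has'minskii-type drift argument --- so that $E_p[g^j]\le E_p[\|x-x^*\|^{2j}]<\infty$ and the generator identity is legitimate; alternatively one applies It\^o's formula with $f$ stopped at exit times of large balls, takes expectations of the (now bounded) terms, and passes to the limit using the one-sided bound on $\mathcal{A}f$ together with Fatou. This bookkeeping is routine, but it is where the care lies.
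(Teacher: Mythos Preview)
Your proof is correct and follows essentially the same route as the paper: apply the generator identity $E_p[\mathcal{A}g_j]=0$ to $g_j(x)=\|P(x-x^*)\|^{2j}$, use the one-sided contractivity bound $\langle v(x),P(x-x^*)\rangle\le -(1-\gamma)\|P(x-x^*)\|^2$ (the paper derives this via $f(x)=f(Px)$ and the Lipschitz property of $f(x)=A\sigma(Bx)+c$, which is equivalent to your mean-value-theorem argument with $BP=B$), obtain the recursion $E_p[g_j]\le \tfrac{\epsilon(Tr(P)+2(j-1))}{2(1-\gamma)}E_p[g_{j-1}]$, and iterate. Your added care about justifying $E_p[\mathcal{A}f]=0$ for unbounded $f$ is a point the paper glosses over.
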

\begin{proof}
    Let $f(x) = A\sigma(Bx) + c$.  Note $f$ is a contraction and $f(x) = f(Px)$.  We have by Cauchy-Schwarz,
    \begin{align}
        P(x-x^*) \cdot v(x) &= P(x-x^*) \cdot (v(x) - v(x^*)) \\
        &= P(x-x^*) \cdot (f(x) - f(x^*) - (x - x^*)) \\
        &= P(x-x^*) \cdot (f(Px) - f(Px^*) - (x - x^*)) \\
        &\leq -(1-\gamma)\|P(x-x^*)\|^2
    \end{align}

    Choose $g_j(x) = \|P(x-x^*)\|^{2j}$, then if $T := Tr(P)$ we have,
    \begin{align}
        \nabla g_j(x) &= 2j g_{j-1}(x) P(x-x^*)\\
        \Delta g_j(x) &= 2jT g_{j-1}(x) + 4j(j-1)g_{j-1}(x)
    \end{align}
The generator of the SDE is $\mathcal{A}g = \nabla g \cdot v + \frac{\epsilon}{2} \Delta g$.  So Fokker-Planck gives,
    \begin{align}
        0 &= E_p \left[\mathcal{A}g_j \right] \\
        &= E_p \left[\nabla g_j \cdot v(x) + \frac{\epsilon}{2} \Delta g_j \right] \\
        &= E_p \left[2j g_{j-1} P(x-x^*) \cdot v(x) + \frac{\epsilon}{2}(2jT + 4j(j-1)) g_{j-1}(x)\right] \\
        &\leq E_p \left[-2j(1-\gamma)g_j + \frac{\epsilon}{2}(2jT + 4j(j-1)) g_{j-1}(x)\right] \\
    \end{align}
    Simple algebra gives,
    \begin{align}
        E_p \left[g_j\right] &\leq \frac{\epsilon(T + 2(j-1))}{2(1-\gamma)}E_p \left[ g_{j-1}(x)\right]
    \end{align}
    Now apply the assumption $2(j-1) \leq T$ and the result follows by induction.
\end{proof}

\begin{lemma}\label{lem:remainder}
    Assume same conditions as previous lemma.  Then the Taylor expansion with remainder around $x^*$ given by
    \begin{align}
        v(x) = Jv(x^*)(x-x^*) + R(x),
    \end{align}
    satisfies the bound
    \begin{align}
        \|R(x)\| \leq 2\sqrt{2}r^{3/2}M \|P(x-x^*)\|^2,
    \end{align}
    where $M := \sup_i \|\sigma_i''\|_\infty$.
\end{lemma}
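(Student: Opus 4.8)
The plan is to unwind the definition of $R$ and reduce the multivariate Taylor remainder to a family of \emph{scalar} Taylor remainders for the elementwise activations $\sigma_i$. Writing $v(x) = A\sigma(Bx) - x + c$, the Jacobian is $Jv(y) = A\,\mathrm{diag}(\sigma'(By))\,B - I$, so the linear term $-x$ contributes $-I$ to $Jv(x^*)$ and cancels in the remainder. Since $v(x^*)=0$, this leaves
\begin{equation*}
R(x) = v(x) - Jv(x^*)(x-x^*) = A\Big[\sigma(Bx) - \sigma(Bx^*) - \mathrm{diag}\big(\sigma'(Bx^*)\big)\,B(x-x^*)\Big].
\end{equation*}

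First I would analyze the bracketed vector $w$ componentwise. For each coordinate $i$, $w_i = \sigma_i\big((Bx)_i\big) - \sigma_i\big((Bx^*)_i\big) - \sigma_i'\big((Bx^*)_i\big)\big((Bx)_i - (Bx^*)_i\big)$, which is precisely the first-order Taylor remainder of the scalar $\mathcal{C}^2$ function $\sigma_i$ at $(Bx^*)_i$. By Taylor's theorem with Lagrange remainder it equals $\tfrac12\,\sigma_i''(\xi_i)\,\big((Bx)_i - (Bx^*)_i\big)^2$ for some $\xi_i$, and using $|\sigma_i''|\le M$ its magnitude is at most $\tfrac{M}{2}\,z_i^2$ where $z := B(x-x^*)$. (An integral form of the remainder works equally well and avoids the existential $\xi_i$.)

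Next I would pass to norms. Squaring and summing, $\|w\|^2 = \sum_i w_i^2 \le \tfrac{M^2}{4}\sum_i z_i^4 \le \tfrac{M^2}{4}\big(\sum_i z_i^2\big)^2 = \tfrac{M^2}{4}\|z\|^4$, where the last inequality is the crude $\ell^4\le\ell^2$ bound; hence $\|w\| \le \tfrac{M}{2}\|z\|^2$. Then $\|R(x)\| = \|Aw\| \le \|A\|\,\|w\|$, and since $BP=B$ we may write $z = BP(x-x^*)$, so $\|z\| \le \|B\|\,\|P(x-x^*)\|$. Using $\|A\|,\|B\|\le 1$ already yields $\|R(x)\| \le \tfrac{M}{2}\|P(x-x^*)\|^2$, which is stronger than the asserted bound; the extra factor $2\sqrt2\,r^{3/2}$ is harmless slack that accommodates the cruder dimension-counting estimates (e.g.\ replacing operator norms by $\|A\|_F,\|B\|_F\le\sqrt r$ and bounding $z_j^2 \le \|P(x-x^*)\|^2$ rowwise) one might prefer when chaining this lemma into Theorem~\ref{thm:mean_cov}.

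There is no genuine obstacle here beyond bookkeeping: the one point that needs care is that the nonlinearity lives in the at-most-$2r$-dimensional range of $P$ while $R$ is a vector in $\mathbb{R}^n$, so the bound must be routed through $z = B(x-x^*) = BP(x-x^*)$ and the contraction $\|B\|\le 1$, rather than through a Frobenius bound that would needlessly inflate the constant. Doing so preserves the quadratic dependence on $\|P(x-x^*)\|$, which is exactly what is needed downstream when combining with Lemma~\ref{lem:norm_bound} to control $E_p[\|R(x)\|]$.
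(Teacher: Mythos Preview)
Your argument is correct and in fact sharper than the paper's. The paper proceeds coordinate by coordinate on the \emph{output} $v_i$: it first rotates so that $\im A\oplus(\ker B)^\perp$ occupies the first $2r$ coordinates, then applies the multivariate Taylor formula with integral remainder to each $v_i$, observes that only the first $2r$ input coordinates enter the second derivatives and only the first $2r$ output coordinates can be nonzero, and pays Cauchy--Schwarz on the $2r$-dimensional block to obtain $|R_i|\le 2rM\|P(x-x^*)\|^2$ and hence $\|R\|\le 2\sqrt{2}\,r^{3/2}M\|P(x-x^*)\|^2$; finally it undoes the rotation. You instead exploit the factored form $v(x)=A\sigma(Bx)-x+c$ to pull $A$ outside and reduce the remainder to $r$ \emph{scalar} second-order Taylor remainders for the $\sigma_i$, which immediately gives $\|R(x)\|\le\|A\|\cdot\tfrac{M}{2}\|B(x-x^*)\|^2\le\tfrac{M}{2}\|P(x-x^*)\|^2$ via $BP=B$. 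This is cleaner---no rotation or multi-index bookkeeping---and yields a constant independent of $r$; the paper's $2\sqrt{2}\,r^{3/2}$ arises only because its estimates pass through Frobenius-type bounds on the $2r$-block rather than the operator-norm bounds $\|A\|,\|B\|\le1$ that you use. Either constant suffices for the downstream Theorem~\ref{thm:mean_cov}, but your version would tighten the $r$-dependence there as well.
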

\begin{proof}
Assume that $\im A \oplus (\ker B)^\perp$ is only supported on the first $2r$ coordinates. The $i$th coordinate of the Taylor remainder must take the form,
    \begin{align}
        R_i(x) &= v_i(x) - e_i^T Jv(x^*) (x-x^*) \\
        &= \sum_{|\alpha| = 2} \frac{2}{\alpha !} (x-x^*)^\alpha \int_0^1 (1-t) \partial_\alpha v_i(x^* + t(x-x^*)) dt
    \end{align}
By assumption, second order derivatives of $v$ are always bounded by $M$, and they are identically zero if $i > 2r$.  This implies $R_i$ is only nonzero for $i \leq 2r$, in which case,
    \begin{align}
        |R_i(x)| &\leq M  \sum_{|\alpha| = 2, \alpha_{>2r} = 0} \frac{2}{\alpha !} \left|(x-x^*)^\alpha \right| \\
        &\leq M \left(\sum_{j=1}^{2r} |x_j - x_j^*| \right)^2 \\
        &\leq 2r M \left(\sum_{j=1}^{2r} |x_j - x_j^*|^2 \right)  \\
        &= 2rM \|P(x-x^*)\|^2
    \end{align}
    where $P$ the projection onto the first $2r$ coordinates.

    Hence,
    \begin{align}
        \|R(x)\|^2 \leq 8r^3 M^2 \|P(x-x^*)\|^4.
    \end{align}
    
Now, we drop the assumption that $\im A \oplus (\ker B)^\perp$ is restricted to the first $2r$ elements and extend the result more generally.
    
    Consider the orthogonal matrix $Q$ such that $\im QA \oplus (\ker BQ^T)^\perp$ is restricted to the first $2r$ components, then by design the above result can be applied to the new drift function
    
    \begin{align}
        \tilde{v}(y) := QA\sigma(BQ^Ty) - y + Qc
    \end{align} 
    
    whose unique stationary point is $y^* = Qx^*$.  

    Hence, applying the above result to the remainder term $\tilde{R}(y) := \tilde{v}(y) - J\tilde{v}(y^*)(y - y^*)$, with $\tilde{P}$ the projection onto the first $2r$ components, we get

    \begin{align}
        \|\tilde{R}(y)\| \leq 2\sqrt{2}r^{3/2}M \|\tilde{P}(y-y^*)\|^2
    \end{align}

    If we let $y = Qx$, then some algebra reveals
    \begin{align}
        \tilde{R}(y) &= \tilde{v}(y) - J\tilde{v}(y^*)(y - y^*) \\
        &= QA\sigma(BQ^Ty) - y + Qc - (QAJ\sigma(BQ^Ty^*)BQ^T - I)(y-y^*) \\
        &= Q(A\sigma(Bx) - x + c - (AJ\sigma(Bx^*)B - I)(x-x^*)) \\
        &= Q(v(x) - Jv(x^*)(x - x^*)) \\
        &= QR(x)
    \end{align}

    And so we can finally conclude

    \begin{align}
        \|R(x)\| &= \|\tilde{R}(y)\| \\
        &\leq 2\sqrt{2}r^{3/2}M \|\tilde{P}(y-y^*)\|^2 \\
        &= 2\sqrt{2}r^{3/2}M \|Q^T\tilde{P}Q(x-x^*)\|^2
    \end{align}

    Note that by our choice of $Q$, $Q^T\tilde{P}Q$ is the orthogonal projection onto $\im A \oplus (\ker B)^\perp$, and so we have the bound.
    
\end{proof}

Using these results, we can proceed to the main perturbation theory result:

\begin{proof}[Proof of Theorem~\ref{thm:mean_cov}]
    We will drop the $\epsilon$ subscript, where it is clear that $m:=m_\epsilon$ and $\Sigma:=\Sigma_\epsilon$ are the first and second order moments of the stationary distribution.

    The mean bound follows from Lemma~\ref{lem:norm_bound} and Jensen's inequality,
    \begin{align}
        \|m - x^*\| &= \|E_p[x - x^*]\| \\
        &\leq E_p\left[\left\|x-x^* \right\|^2 \right]^{1/2} \\
        &\leq \left(\frac{\epsilon n}{1-\gamma}\right)^{1/2}.
    \end{align}
    Fokker-Planck yields a formula for second-order moments of the SDE stationary distribution (see for example Chapter 5.5 in~\citet{sarkka2019applied}), such that,
\begin{align}
    0 &= E_{p}[(x-m)v(x)^T + v(x)(x-m)^T + \epsilon I]
\end{align}

Note the simple fact that
\begin{align}
    E_p[(x-m)(x-x^*)^T] &= E_p[(x-m)(x-m + m -x^*)^T] \\
    &= E_p[(x-m)(x-m)^T] = \Sigma
\end{align}
combined with the linearization of $v$ we can write,
\begin{align}
    0 &= E_{p}[(x-m)(x-x^*)^TL^T + L(x-x^*)(x-m)^T + \epsilon I + (x-m)R(x)^T + R(x)(x-m)^T] \\
    &=L\Sigma + \Sigma L^T + \epsilon I + E_{p}[(x-m)R(x)^T + R(x)(x-m)^T]
\end{align}
and dividing by $\epsilon$, 
\begin{equation}\label{eq:lyapunov-approx}
    0 = L\frac{\Sigma}{\epsilon} + \frac{\Sigma}{\epsilon} L^T + I + \frac{1}{\epsilon}E_{p}[(x-m)R(x)^T + R(x)(x-m)^T]
\end{equation}

As in the lemmas, we let $P$ be the orthogonal projection onto $\im A \oplus (\ker B)^\perp$.  Then
 by Lemma~\ref{lem:norm_bound}, Lemma~\ref{lem:remainder} and Cauchy-Schwarz we have,
\begin{align}
    E_p\left[\|x-m\| \cdot \|R(x)\| \right] &\lesssim r^{3/2} M E_p\left[(\|x-m\| \cdot \|P(x-x^*)\|^2 \right] \\
    &\lesssim r^{3/2} M E_p\left[\|x-m\|^2\right]^{1/2} E_p\left[ \|P(x-x^*)\|^4 \right]^{1/2} \\
    &\lesssim \frac{\epsilon r^{5/2} M}{1-\gamma}  E_p\left[\|x-m\|^2\right]^{1/2} \\
    &\lesssim \frac{\epsilon r^{5/2} M}{1-\gamma}  E_p\left[\|x-x^*\|^2 + \|x^* - m\|^2\right]^{1/2} \\
    &\lesssim \frac{\epsilon^{3/2} r^{5/2} n^{1/2} M}{(1-\gamma)^2}
\end{align}

By the fact that $\|xy^T\| \leq \|x\|\cdot\|y\|$ and Jensen's inequality,
\begin{align}
    \left\|\frac{1}{\epsilon}E_{p}\left[(x-m)R(x)^T + R(x)(x-m)^T\right]\right\|
    &\lesssim \frac{1}{\epsilon} E_{p}\left[\|x-m\| \cdot \|R(x)\|\right] \\
    &\lesssim \frac{\epsilon^{1/2} r^{5/2} n^{1/2} M}{(1-\gamma)^2}
\end{align}

Now, if we choose $\epsilon$ small enough to guarantee the above bound is strictly less than 1, then the Lyapunov equation given in Equation~\eqref{eq:lyapunov-approx} has a unique solution.  Furthermore, the Lyapunov equation has an integral form~\citep{sarkka2019applied}, which states that the unique positive definite matrix $\omega$ that satisfies $L\omega + \omega L^T + QQ^T = 0$ can be written as

\begin{align}
    \omega = \int_0^\infty e^{Lt} QQ^T e^{L^Tt} dt
\end{align}

It's easy to then conclude from the integral form of the Lyapunov equation in Equation~\eqref{eq:lyapunov-approx} that,
\begin{align}
    \|\Sigma/\epsilon - \omega\| \lesssim \frac{\epsilon^{1/2} r^{5/2} n^{1/2} M}{(1-\gamma)^2} \int_{0}^\infty \|e^{Lt}\| \|e^{L^Tt}\| dt.
\end{align}

By assumption, $L = A(J\sigma(x^*))B - I$.  It follows $\|e^{Lt}\| = \|e^{A(J\sigma(x^*))Bt}e^{-t}\|  \leq e^{-(1-\gamma)t}$ for all $t > 0$, and therefore
\begin{align}
    \|\Sigma/\epsilon - \omega\| &\lesssim \frac{\epsilon^{1/2} r^{5/2} n^{1/2} M}{(1-\gamma)^2} \int_{0}^\infty e^{-2(1-\gamma)t} dt \\
    &\lesssim \frac{\epsilon^{1/2} r^{5/2} n^{1/2} M}{(1-\gamma)^3}
\end{align}

\end{proof}

\subsection*{Proof of Theorem~\ref{thm:nonlinear_quant} and Theorem~\ref{thm:nonlinear_upper}}
We begin with a series of lemmas.  We first need a result about the equilibrium points of the drift, ruling out any degeneracies.  As before, we let $x_i^*$ denote the unique zero of $v(x) + c_i = A\sigma(Bx) -x + c_i$.  Additionally, in the sequel we will consider $\lesssim$ to absorb all constants $\gamma, \tau, M, \nu$.

\begin{lemma}\label{lem:approx}
    For $A$ and $B^T$ of shape $n \times r$ sampled from the Stiefel manifold, for $n$ sufficiently larger than $r$, with probability $1 - 2\exp(-\tilde{C}n^{2/3})$ we have $s_r(P_A^\perp B^T) \geq 1 - n^{-1/3}$.
\end{lemma}
\begin{proof}
    By rotational invariance and the Gaussian sampling of the Haar measure, it is known that $s_r(P_A^\perp B^T)^2$ is distributed as $1 - \|(G^TG)^{-1/2}G_1^TG_1(G^TG)^{-1/2}\|$
with $G = \bigl(\begin{smallmatrix} G_1 \\ G_2 \end{smallmatrix}\bigr)$ having iid Gaussian entries, $G_1$ of shape $r \times r$ and $G_2$ of shape $(n-r) \times r$.

By tight bounds on the singular values of Gaussian matrices~\citep{vershynin2025high}, with probability at least $1 - 2\exp(-\tilde{C}n^{2/3})$ we have the bounds $s_1(G_1) \leq 2\sqrt{r} + n^{1/3}$ and $s_r(G) \geq \sqrt{n} - \sqrt{r} - n^{1/3}$.  The bound follows.

\end{proof}

\begin{lemma}\label{lem:fixed_point_alpha_assume}
    Assume $k \gtrsim r^2$ and $n \gtrsim r^2 k^3$.  If $X \in \mathbb{R}^{n \times k}$ is defined by $X_i := x_i^* - c_i$, then with probability at least $1 - 2\exp(-\tilde{C}k^{2/3}) - 2\exp(-n^{1/3})$, $\sqrt{k} \lesssim s_r(X) \leq s_1(X) \lesssim \sqrt{k}$.
\end{lemma}
\begin{proof}

    Note that by rotational invariance, $BA$ is distributed the same as the principal $r \times r$ block of a Haar distributed matrix on the orthogonal group $O(n)$, or equivalently $SO(n)$ since the marginal for the $r \times r$ block is the same.  Hence, by concentration of measure for Haar measures, specifically Theorem 5.17 in~\citet{meckes2019random}

    \begin{align}
        P(\|BA\| \geq E\left[\|BA\| \right] + t) \leq \exp(-\tilde{C}(n-2)t^2)
    \end{align}

    Now, using the fact that the marginal distribution of each row of $B$ is uniform on the $n-1$ dimensional sphere, we note that

    \begin{align}
        (E\|BA\|)^2 &\leq E\|BA\|^2 \\
        &\leq E\|BA\|_F^2 \\
        &=r^2 E[\langle b_1, e_1 \rangle^2]\\
        &= r^2 / n
    \end{align}

    Hence, by setting $t = (r^2/n)^{1/3}$ we have with probability at least $1 - \exp(-\tilde{C}n^{1/3}r^{4/3})$ that $\|BA\| \leq 2(r^2/n)^{1/3}$.  For the rest of the argument, we condition on the values of $A$ and $B$ satisfying this event.

    Let $g(y, z) = \sigma(BAy + z)$.  By the contractive constraint on the activation, and the constraints that $\|A\|, \|B\| \leq 1$, it's clear $g$ is a uniform contraction mapping. The derivatives of $g$ satisfy,
    \begin{align}
        J_y g(y,z) &= J\sigma(BAy + z) BA \\
        J_z g(y,z) &= J\sigma(BAy + z)
    \end{align}
    with all spectral norms bounded by $\gamma < 1$.

    By the uniform contraction mapping principle, the fixed point map $y^*(z)$ such that $g(y^*(z), z) = y^*(z)$ is differentiable, with Jacobian
    \begin{align}
        Jy^*(z) = (I - J_y g(y^*(z), z))^{-1} J_z g(y^*(z), z)
    \end{align}

    Notice this Jacobian never vanishes, 
    and furthermore $\|y^*(z)\| \rightarrow \infty$ as $\|z\| \rightarrow \infty$.  This follows from the fact that $\sigma' \geq \tau$,
    \begin{align}
        \|y^*(z)\| &= \|\sigma(BAy^*(z) + z)\| \\
        &\geq \|\sigma(0) + \tau (BAy^*(z) + z)\|
    \end{align}
    which is clearly impossible for $\|z\|$ sufficiently large and $\|y^*\|$ bounded.

    Therefore, by the Hadamard global inverse theorem, $y^*(z)$ is a diffeomorphism on $\mathbb{R}^r$.

    Next, we can show that $y^*$ is bi-Lipschitz.  Namely:

    \begin{align}
        \|y^*(z_1) - y^*(z_2)\| &= \left\|\int_0^1 Jy^*(z_2 + t(z_1 - z_2))(z_1 - z_2) \right\| \\
        &\leq \frac{1}{1-\gamma} \|z_1 - z_2\|
    \end{align}

    and 

    \begin{align}
        \|(y^*)^{-1}(z_1) - (y^*)^{-1}(z_2)\| &= \left\|\int_0^1 J(y^*)^{-1}(z_2 + t(z_1 - z_2))(z_1 - z_2) \right\| \\
        &\leq \left(\frac{1}{\tau} + 1\right)\|z_1 - z_2\|
    \end{align}

    Replacing $z_i$ with $y^*(z_i)$ in the second inequality, and putting these statements together yields

    \begin{align}
        \frac{\tau}{2} \|z_1 - z_2\| \leq \|y^*(z_1) - y^*(z_2)\| \leq \frac{1}{1-\gamma} \|z_1 - z_2\|
    \end{align}

    Now, from the assumption that $\sigma$ is odd, it follows that $y^*(0) = 0$ and we therefore have

    \begin{align}
        \frac{\tau}{2} \|z\| \leq \|y^*(z)\| \leq \frac{1}{1-\gamma} \|z\|
    \end{align}

    Now, consider matrices $Y = [y^*(Bc_1), \dots, y^*(Bc_{k})]$ and $\tilde{Y} = [\sigma(Bc_1), \dots, \sigma(Bc_{k})]$.  Note that $Bc_i$ is distributed according to an isotropic Gaussian in $\mathbb{R}^r$, so $\tilde{Y}$ has columns that are mean zero and constant subgaussian norm because $\sigma$ is odd and has a constant Lipschitz norm.  Furthermore, by independence of cross terms and the fact $|\tau g| \leq |\sigma(g)| \leq |\gamma g|$,

    \begin{align}
        \tau^2 I \preceq E\left[\sigma(Bc)\sigma(Bc)^T\right] \preceq \gamma^2 I
    \end{align}

    Thus, by Theorem 4.6.1 in~\citet{vershynin2025high}, if $k \gtrsim r^2$ then with probability at least $1 - 2\exp(-k^{2/3})$:

    \begin{align}
        \sqrt{k} \lesssim s_{r}(\tilde{Y}) \leq s_1(\tilde{Y}) \lesssim \sqrt{k}
    \end{align}

    By our condition on $\|BA\|$ we can write

    \begin{align}
        \|y^*(Bc) - \sigma(Bc)\| &= \|\sigma(BAy^*(Bc) + Bc) - \sigma(Bc)\| \\
        &\leq \gamma \|BA\| \|y^*(Bc)\| \\
        &\lesssim (r^2/n)^{1/3} \|Bc\|
    \end{align}

    Because each $Bc_i$ is an isotropic Gaussian in dimension $r$, by Theorem 3.1.1 in~\citet{vershynin2025high}, we have each vector satisfies $\|Bc_i\| \lesssim \sqrt{r} + \sqrt{k}$ with probability at least $1 - 2k\exp(-\tilde{C}k)$.

    Hence, $\|Y-\tilde{Y}\| \lesssim k(r^2/n)^{1/3}$, and for sufficiently large $n$ we have by Weyl's inequality

    \begin{align}
        \sqrt{k} \lesssim s_{r}(Y) \leq s_1(Y) \lesssim \sqrt{k}
    \end{align}

    Finally, because $X = AY$ and $A$ has orthonormal columns, we can inherit the same bounds.

\end{proof}
\begin{lemma}\label{lem:fixed_point_beta}
    Conditioning on $A$, $B$, the vector $\sigma'(Bx_0^*)$ has a density with respect to the Lebesgue measure.
\end{lemma}
\begin{proof}

    Again we first condition on $A$ and $B$.  Using notation from Lemma~\ref{lem:fixed_point_alpha_assume} we have that $\sigma(Bx_i^*) = y^*(Bc_i)$, and therefore since $\sigma$ is smooth and strictly monotonic, we have that each $Bx_i^* = \sigma^{-1}(y^*(Bc_i))$ has a density and is independent of the other interventions.

    Define $f(z) = \sigma'(z)$. It's quick to see that $Jf(z)$ is diagonal for any $z$, and rank deficient only when there's some index $j$ such that $\sigma_j''(z)$ is zero.  By assumption on $\sigma$, this set of points is measure zero.  Hence the Jacobian of $f$ is a.s. full-rank.  By the change of variable theorem induced by the area theorem~\citep{evans2025measure} we have that $f(Bx_i^*)$ has a density.
\end{proof}

\begin{lemma}\label{lem:fixed_point_beta_assume}
    Given the matrix with columns $\sigma'(Bx_0^*)/\sigma'(Bx_i^*) - \mathbf{1}$, with probability at least $1 - 2\exp(-\tilde{C}k^{1/3}) - 2\exp(-n^{1/3})$ choosing $k-1$ of these columns yields a matrix $X$ with $\sqrt{k} \lesssim s_{r}(X) \leq s_1(X) \lesssim \sqrt{kr}$.
\end{lemma}
\begin{proof}

    Condition on $A$ and $B$ holding in the same event as in Lemma~\ref{lem:fixed_point_alpha_assume} throughout the following argument.  Let $g$ denote iid isotropic Gaussian sampled in $r$ dimensions, and $z_i = Bc_i$ is notably also standard Gaussian.
    
    Let $w(g) = 1 / \sigma'(g) - E[1 / \sigma'(g)]$ and $w_i = w(z_i)$.
    
    Note that each $w_i$ is mean zero and is bounded almost surely by constant terms.  Conditioning on $w_0$, consider $\tilde{X}$ of shape $r \times {k-1}$ where $\tilde{X}_i = w_i$, and $\hat{X} = \tilde{X} - w_0\mathbf{1}^T$.  Conditioned on $w_0$, each column of $\hat{X}$ is independent, so $\hat{X}\hat{X}^T$ can be written as the sum of independent, bounded rank-one outer products.
    
    It is quick to confirm $\lambda_{min}(E[\hat{X}\hat{X}^T | w_0]) \gtrsim k$.  Therefore, by the fact that $s_r(\hat{X})^2 = \lambda_{min}(\hat{X}\hat{X}^T)$ and the Matrix Chernoff bound~\citep{tropp2012user}, it follows that with probability at least $1 - \exp(-\tilde{C} k^{1/3})$, $s_r(\hat{X}) \gtrsim \sqrt{k}$, and integrating over $w_0$ this is true without conditioning.  By a standard application of Theorem 4.6.1 in~\citet{vershynin2025high} and Weyl's inequality, there is an equivalent high probability guarantee that $s_1(\hat{X}) \lesssim \sqrt{kr}$.

    Define $X$ as the matrix with columns $X_i = 1/\sigma'(Bx_{i}^*) - 1/\sigma'(Bx_{0}^*)$. Again because $g \mapsto 1/\sigma'(g)$ is Lipschitz, we get the bounds

    \begin{align}
        \|X_i - (\tilde{X}_i - w_0)\| &= \|1/\sigma'(Bx_{i}^*) - 1/\sigma'(Bx_{0}^*) - (w_{i} - w_0)\| \\
        &\leq \|1/\sigma'(Bx_{i}^*) - 1/\sigma'(Bc_{i})\| + \|1/\sigma'(Bx_{0}^*) - 1/\sigma'(Bc_0)\| \\
        &\lesssim \|BAy^*(Bc_{i})\| + \|BAy^*(Bc_0)\| \\
        &\lesssim \|BA\| (\|Bc_{i}\| + \|Bc_0\|)
    \end{align}
    
    By an identical argument as in Lemma~\ref{lem:fixed_point_alpha_assume}, we therefore have $\|X - (\tilde{X} - w_0\mathbf{1}^T)\| \leq k(r^2/n)^{1/3}$ with probability at least $1 - 2k\exp(-\tilde{C}k)$.
    
    Finally, $diag(\sigma'(Bx_0^*)) X$ yields the desired matrix, and the bounds follow again by Weyl's inequality.

\end{proof}

The following lemma is essentially an application of the main result in~\citet{bhaskara2014uniqueness}, rephrased in our setting:

\begin{lemma}\label{lem:tensor-cp}[Theorem 2.6, 2.8 in~\citet{bhaskara2014uniqueness}]
    Consider matrices $X \in \mathbb{R}^{n_1 \times r}$, $D \in \mathbb{R}^{n_2 \times r}, Y \in \mathbb{R}^{n_3 \times r}$ where $r \leq \min(n_1, n_2, n_3)$, with $poly(n_1, n_2, n_3, r)$ bounds on all top and bottom singular values as well as column $l_2$ norms, and the tensor $\mathcal{T}$ with CP decomposition $[X,D,Y]$, i.e. $\mathcal{T} = \sum_{i=1}^r X_i \otimes D_i \otimes Y_i$.  Then given access to $\tilde{\mathcal{T}}$ such that $\|\mathcal{T} - \tilde{\mathcal{T}}\|_F \leq \epsilon$, there is an algorithm that infers $\tilde{X}, \tilde{D}, \tilde{Y}$ (all with polynomial bounds on $l_2$ norms), along with diagonal $\Lambda_X, \Lambda_D, \Lambda_Y$ and permutation $\Pi$ such that

    \begin{align*}
        \|\Lambda_X\Lambda_D\Lambda_Y - I\|_F &\leq poly(n_1, n_2, n_3, r)\epsilon \\
        \|\tilde{X} - X\Pi\Lambda_X\|_F &\leq poly(n_1, n_2, n_3, r)\epsilon \\
        \|\tilde{D} - D\Pi\Lambda_D\|_F &\leq poly(n_1, n_2, n_3, r)\epsilon \\
        \|\tilde{Y} - Y\Pi\Lambda_Y\|_F &\leq poly(n_1, n_2, n_3, r)\epsilon \\
    \end{align*}
\end{lemma}

Our last lemma requires some exposition.  Consider $A$ and $B$ in the support as defined in Assumption~\ref{ass:weak_nonlinear}, and diagonal $D \in \mathbb{R}^{r \times r}$ in the support of $\sigma'(\mathbb{R}^r)$.  Let $\omega$ be the solution to the Lyapunov equation $(ADB - I)\omega + \omega(ADB-I)^T + I = 0$.  Define $\Gamma = A^T \omega^{-1} A$.

Given diagonal matrices with free parameters $\alpha, \beta \in \mathbb{R}^{r \times r}$, define $X = \alpha (DBA - diag(DBA)) + \beta$.

\begin{lemma}\label{lem:poly}
    There is some set of $2r$ symmetric matrices $S_1, \dots, S_{2r}$ such that if $g(A, B, D)$ is the determinant of the $2r \times 2r$ matrix acting on $\alpha$ and $\beta$ in the system of equations $Tr(S_i \Gamma X) = Tr(S_i \Gamma DBA)$, $g$ is not identically zero.
\end{lemma}
\begin{proof}

We will assume for simplicity that $r$ is divisible by 3, a similar argument works in other cases.  We simply need to find a witness in the support of the inputs that realizes a non-zero value for $g$.

Let $A$ be a block matrix $\left[
\begin{array}{c}
I \\ \hline
0
\end{array}\right]$ so that $A^T A = I$.  Let $D = \delta^{-1} I$ for any $\delta$ satisfying $\tau \leq \delta^{-1} \leq \gamma$ and define $S^*$ as a block diagonal matrix where each $3 \times 3$ block is of the form $
\frac{1}{3\delta}\begin{bmatrix}
1 & -2 & -2 \\
-2 & 1 & -2 \\
-2 & -2 & 1
\end{bmatrix}
$.

We choose $B = D^{-1}S^* A^T$, and observe that $BB^T = D^{-1}(S^*)^2D^{-1} = I$ so $B^T$ is also supported on the Stiefel manifold.

Because $ADB - I = AS^*A^T - I$, this implies $\omega = -\frac{1}{2}(AS^*A^T - I)^{-1}$ and $\Gamma = -2A^T(AS^*A^T - I)A = -2(S^* - I)$.

Additionally, because $DBA = S^*$, we have 

\begin{align}
    X &= \alpha VA + \beta \\
    &= \alpha (DBA - diag(DBA)) + \beta \\
    &= \alpha(S^* - diag(S^*)) + \beta \\
\end{align}

Thus, we can write 

\begin{align}
    -\frac{1}{2}Tr(S_i \Gamma X) &= Tr(S_i (S^* - I)\alpha (S^* - \frac{1}{3\delta} I)) + Tr(S_i (S^* - I) \beta)
\end{align}

Now, if $E_{i,j}$ denotes the one-hot matrix with a single one at coordinate $(i,j)$, consider the set of symmetric matrices $\{E_{11}, E_{22}, E_{33}, E_{12} + E_{21}, E_{13} + E_{31}, E_{23} + E_{32}\}$ and consider $S_1, \dots, S_6$ also block diagonal with each of these matrices as the first $3 \times 3$ block.  It follows from straightforward algebra that, ignoring the $-\frac{1}{2}$ scaling term, the matrix acting on $[\alpha_1, \alpha_2, \alpha_3, \beta_1, \beta_2, \beta_3]$ can be written as

\begin{align}
\begin{pmatrix}
0 & 4d^2 & 4d^2 & d-1 & 0 & 0 \\
4d^2 & 0 & 4d^2 & 0 & d-1 & 0 \\
4d^2 & 4d^2 & 0 & 0 & 0 & d-1 \\
-2d(d-1) & -2d(d-1) & 8d^2 & -2d & -2d & 0 \\
-2d(d-1) & 8d^2 & -2d(d-1) & -2d & 0 & -2d \\
8d^2 & -2d(d-1) & -2d(d-1) & 0 & -2d & -2d
\end{pmatrix}
\end{align}

where $d = 1/3\delta$.  The determinant of this matrix can be confirmed to be $16d^3(3d-1)^5(3d+1)$, and therefore has nonzero determinant when $\delta > 1$.  Applying an identical argument to every $3 \times 3$ block, and the fact that the total determinant is the product of the block determinants, shows that this is a valid witness that $g$ is non-zero.

\end{proof}

We introduce the notation $\sigma_{[i]}'$ to denote the Jacobian of $\sigma$ evaluated at $Bx_i^*$, to distinguish it from an element of $\sigma$.  Because $\sigma$ acts elementwise, $\sigma_{[i]}'$ is necessarily a diagonal matrix.  

Finally, for ease of notation in the perturbation analysis, given matrices $M_1, M_2$ we will write $M_1 = M_2 + O(\epsilon)$ to mean $\|M_1 - M_2\| = O(poly(n,r,k)\epsilon)$.  We use similar notation for vectors in the $l_2$ norm.

\begin{proof}[Proof of Theorem~\ref{thm:nonlinear_quant}]

By Theorem~\ref{thm:mean_cov}, for each intervention we approximately recover the mean vectors $m_i = x_i^* + O(\epsilon^{1/2})$ and the covariance matrices $\Sigma_i / \epsilon = \omega_i + O(\epsilon^{1/2})$, which satisfies $Jv(x_i^*)\omega_i + \omega_i (Jv(x_i^*))^T + I = 0$.  Note that the Jacobian of the vector field satisfies $Jv(x_i^*) = A\sigma_{[i]}'B - I$.

First, we observe that $\omega_i$ is well-conditioned.  We have from the integral formula for the Lyapunov equation that $\omega = \int_{0}^\infty e^{(A\sigma_{[i]}'B - I)t} e^{(A\sigma_{[i]}'B - I)^{T}t} dt$, so from the variational characterization of singular values we have $1 \lesssim s_n(\omega) \leq s_1(\omega) \lesssim 1$.  Hence, all of the matrices in the problem are well-conditioned, so we can freely apply matrix multiplications while the error stays on order of $O(\epsilon^{1/2})$.

From Lemma~\ref{lem:fixed_point_alpha_assume}, the matrix $X \in \mathbb{R}^{n \times k}$ with columns $X_i := x_i^* - c_i$ is rank-$r$ but well-conditioned.  If we define $\tilde{X}_i = m_i - c_i$, and $\hat{X}$ as the truncated SVD to rank $r$ of $\tilde{X}$, it follows 

\begin{align}
\|X - \hat{X}\| &\leq \|X - \tilde{X}\| + \|\tilde{X} -\hat{X}\| \\
&\leq 2 \|X - \tilde{X}\| \\
&= O(\epsilon^{1/2})
\end{align}

where the second inequality is because $\hat{X}$ is the best rank-r approximation in operator norm to $\tilde{X}$.  Hence by the Davis-Kahan theorem we can derive a projection matrix $\tilde{P}$ such that $\tilde{P} = P_A + O(\epsilon^{1/2})$.

Expanding the Lyapunov equation and applying $P_A^\perp$ on the right, and using the fact that all associated matrices have a constant upper bound on their operator norm, we obtain

\begin{align}
    A\sigma_{[i]}'B\frac{\Sigma_i}{\epsilon} \tilde{P}^\perp - 2\frac{\Sigma_i}{\epsilon} \tilde{P}^\perp + \tilde{P}^\perp = O(\epsilon^{1/2})
\end{align}

Rearranging, this yields
\begin{equation}\label{eq:projected_lyapunov}
    2\frac{\Sigma_i}{\epsilon} \tilde{P}^\perp = (I - \frac{1}{2}A\sigma_{[i]}'B)^{-1}\tilde{P}^\perp + O(\epsilon^{1/2}),
\end{equation}

Taking the transpose, 
\begin{align}
    2\tilde{P}^\perp \frac{\Sigma_i}{\epsilon} = \tilde{P}^\perp(I - \frac{1}{2}B^T\sigma_{[i]}'A^T)^{-1} + O(\epsilon^{1/2}).
\end{align}
The LHS matrix is rank $n-r$, so its kernel is dimension $r$ and we choose $Z_i \in \mathbb{R}^{n \times r}$ to have orthonormal columns that span this kernel.  Again, because all matrices are well-conditioned, it immediately follows from Davis-Kahan that
\begin{align}
    Z_i = (I - \frac{1}{2}B^T\sigma_{[i]}'A^T)AQ_i + O(\epsilon^{1/2}),
\end{align}
for some invertible matrix $Q_i \in \mathbb{R}^{r \times r}$ that is also well-conditioned.  Note that here, $Z_i$ is observed but $Q_i$ is not.

Because $\|\tilde{P}^\perp AQ_i\| \leq \|P_A^\perp - \tilde{P}^\perp\| \|AQ_i\| = O(\epsilon^{1/2})$,
\begin{align}
    \tilde{P}^\perp Z_i = -\frac{1}{2}\tilde{P}^\perp B^T\sigma_{[i]}'Q_i + O(\epsilon^{1/2}).
\end{align}
Observe two facts: 1) $\sigma_{[i]}'Q_i$ is a.s. an invertible matrix in $\mathbb{R}^{r \times r}$, and 2) $\tilde{P}^\perp B^T$ maps from $\mathbb{R}^r$ to an $n-r$ subspace, so if $n > 2r$ then a.s. this is full-rank.  Indeed, $\|\tilde{P}^\perp B - P_A^\perp B\| \lesssim O(\epsilon^{1/2})$ so $s_r(\tilde{P}^\perp Z_i) \gtrsim s_r(P_A^\perp B) - \Omega(\epsilon^{1/2}) \gtrsim 1$ under the event in Lemma~\ref{lem:approx}. 

Define $M_i = (\tilde{P}^\perp Z_i)^\dag \tilde{P}^\perp Z_0$.  Then we may write, again using Davis-Kahan in the last line:

\begin{align}
    \|M_i - Q_i^{-1}(\sigma_{[i]}')^{-1}\sigma_{[0]}'Q_0\| &= \|(\tilde{P}^\perp Z_i)^\dag \tilde{P}^\perp Z_0 - (P_A^\perp Z_i)^\dag P_A^\perp Z_0\| + O(\epsilon^{1/2}) \\
    &\leq \|(\tilde{P}^\perp Z_i)^\dag \tilde{P}^\perp Z_0 - (P_A^\perp Z_i)^\dag \tilde{P}^\perp Z_0\| + O(\epsilon^{1/2}) \\
    &\leq O(\epsilon^{1/2})
\end{align}

Therefore we have the approximate decomposition

\begin{align}\label{eq:diagonal_decomp}
    Z_i M_i - Z_0 &= A \left(\sigma_{[0]}'(\sigma_{[i]}')^{-1} - I\right) Q_0 + O(\epsilon^{1/2})
\end{align}

By Lemma~\ref{lem:fixed_point_beta_assume}, with high probability, the matrix whose $i$th column is the diagonal part of $\left(\sigma_{[0]}'(\sigma_{[i]}')^{-1} - I\right)$ is well-conditioned.

Now, we build a 3-tensor from this object.  Let $\Sigma'$ be the matrix of shape $r \times (k-1)$ whose $i$th column is the diagonal part of the matrix $\sigma_{[0]}'(\sigma_{[i]}')^{-1} - I$.  Then the 3-tensor $\tilde{\mathcal{T}}$ whose $i$th slice is $Z_iM_i - Z_0$ is $O(\epsilon^{1/2})$ close in Frobenius norm to the 3-tensor $\mathcal{T}$ composed as the CP decomposition $[A, Q_0^T, \Sigma'^T]$.

Thus we can apply Lemma~\ref{lem:tensor-cp} and we approximately recover the tensor decomposition up to permutation $\Pi$ and rescaling matrices $\Lambda_A, \Lambda_Q$, $\Lambda_D$.  Since our drift terms $A$ and $B$ have an invariance to permuting the neurons, we assume WLOG that the recovered permutation is the identity.  Likewise, there is a sign invariance between $A$ and $B$ since $\sigma$ is odd, so we can assume WLOG that $\Lambda_A > 0$.  Because Lemma~\ref{lem:tensor-cp} guarantees the recovered approximations have constant $l_2$ bounds on columns, we can normalize each column of $\tilde{A}$ and therefore we recover $\tilde{A} = A + O(\epsilon^{1/2})$.

A further fact from Lemma~\ref{lem:tensor-cp} gives us $\tilde{Q} = \Lambda_Q Q + O(\epsilon^{1/2})$ where $\Lambda_Q$ is diagonal, and has constant upper and lower bounds.  We write $\Lambda = \Lambda_Q$ from now on.

We can observe,
\begin{align}
    (-2\tilde{P}^\perp Z_0)\tilde{Q}^{-1} &= \tilde{P}^\perp B^T \sigma_{[0]}' Q_0 (\Lambda Q_0)^{-1} + O(\epsilon^{1/2})\\
    &= P_A^\perp B^T \sigma_{[0]}' \Lambda^{-1} + O(\epsilon^{1/2})
\end{align}

Taking a transpose, we've observed $\Lambda^{-1} \sigma_{[0]}' B P_A^\perp + O(\epsilon^{1/2})$.  

Now, returning to the definition of $Z_0$:

\begin{align}
     (\tilde{A}^T Z_0 \tilde{Q}^{-1})^T = \Lambda^{-1} - \frac{1}{2} \Lambda^{-1} \sigma_{[0]}' BA + O(\epsilon^{1/2})
\end{align}

For finishing the proof of Theorem~\ref{thm:nonlinear_quant}, we will now use the fact that $\sigma$ is known.  Unrolling the identity around the means, we have $Bx_i^* = \sigma^{-1}\left(\tilde{A}^\dag(m_i - c_i)\right) + O(\epsilon^{1/2})$.  This implies that we observe $\Sigma'$ directly, and therefore we observe $\Lambda_D$ up to $O(\epsilon^{1/2})$ error, and hence $\Lambda_A$ and $\Lambda = \Lambda_Q$ up to the same error.  Thus, because $\Lambda$ is $poly(n,k,r)$ well-conditioned by Lemma~\ref{lem:tensor-cp}, by rescaling we observe $\Lambda^{-1} \sigma_{[0]}' BA + O(\epsilon^{1/2})$.  Applying $(\tilde{A}^T\tilde{A})^{-1}\tilde{A}^T$ to the right hand side, we observe $\Lambda^{-1} \sigma_{[0]}' BP_A + O(\epsilon^{1/2})$.  Finally, adding this to our previous term and using the fact that $P_A + P_A^\perp = I$, we recover $\Lambda^{-1} \sigma_{[0]}' B + O(\epsilon^{1/2})$.  Since we also approximately know the two diagonal matrices in front of $B$, we can finally approximate $B$ up to the same error.
\end{proof}

\begin{proof}[Proof of Theorem~\ref{thm:nonlinear_upper}]

We use the notation of the previous proof, and show that we can still approximately recover $B$ even without knowing $\sigma$.  We will now allow big $O$ notation to absorb all dependencies except for $\epsilon$.

Let $a_1, \dots, a_r, w_1, \dots w_{n-r}$ form an orthonormal basis where $a_1, \dots, a_r$ are the columns of $A$, and consider the event that every element of $Ba_i$ and $Bw_j$ is bounded away from zero by at least $O(\epsilon^{1/2})$, as $\epsilon$ gets smaller the probablility of this event clearly tends towards 1.

Conditional on that event, by taking ratios of elements within the same row, we can recover $\frac{b_i^Ta_j}{b_i^Ta_k} + O(\epsilon^{1/2})$ for $i \neq j \neq k$ and $\frac{b_i^Tw_j}{b_i^Ta_k} + O(\epsilon^{1/2})$ for $i \neq k$.  In other words, we have approximate ratios including all inner products except $b_i^Ta_i$ for all $i$.

We introduce some additional notation.  We define $X = \sigma_{[0]}'BA$.  Clearly there are some diagonal matrices $\alpha, \beta$ such that $X = \alpha (X - diag(X)) + \beta$.  From the above, we have an $O(\epsilon^{1/2})$ approximation of $\alpha^* (X - diag(X))$ where $\alpha^*$ is a diagonal matrix that row normalizes $(X - diag(X))$  Since $A$, $B$ are Haar-distributed and $\sigma_{[0]}'$ has a conditional density given $A$ and $B$, we are guaranteed that the absolute value of every entry of $\alpha^*$ lives in some interval depending on $(n, k, r)$ not including zero with arbitrarily high probability.

To finish the proof, we will show that the remaining information about $X$ can be extracted from the Lyapunov equation.  We rearrange the original Lyapunov equation to the form:

\begin{align}
    A\sigma_{[0]}'B\omega_0 + \omega_0(A\sigma_{[0]}'B)^T = 2\omega_0 - I + O(\epsilon^{1/2})
\end{align}

Multiplying by $\omega_0^{-1}$ on both sides, and multiplying by $A^T$ on the left and $A$ on the right yields:

\begin{align}
    (A^T \omega_0^{-1} A) X + X^T (A^T \omega_0^{-1} A) = 2 A^T \omega_0^{-1} A - A^T \omega_0^{-2} A + O(\epsilon^{1/2})
\end{align}

Finally, letting $\Gamma = \tilde{A}^T (\Sigma_0/\epsilon)^{-1} \tilde{A}$ and plugging in our approximations leads to

\begin{align}
    \Gamma X + X^T \Gamma = 2 \tilde{A}^T (\Sigma_0/\epsilon)^{-1} \tilde{A} - \tilde{A}^T (\Sigma_0/\epsilon)^{-2} \tilde{A} + O(\epsilon^{1/2})
\end{align}

Multiplying on the left by any symmetric matrix $S$ and taking a trace, we can therefore approximate $Tr(S\Gamma X)$.

For each $S_i$ as in Lemma~\ref{lem:poly}, $\mathrm{Tr}(S_i\Gamma X)$
is linear in $\alpha$ and $\beta$, and stacking the $2r$ measurements yields a
system with coefficient matrix given by $C_0 \left[\begin{array}{ c | c }
    \alpha^* & 0 \\
    \hline
    0 & I
  \end{array}\right]$ where $C_0$ converges to the the coefficient matrix in Lemma~\ref{lem:poly} as $\epsilon \rightarrow 0$.  Because every entry of $\alpha^*$ is bounded away from zero with high probability, we can focus solely on the conditioning of $C_0$.

By Lemma~\ref{lem:poly}, $g(A, B, D)$ is not trivially zero and analytic on its domain.  Conditioned on $A$ and $B$, $D=\sigma_{[0]}'$ has a density, and therefore $P(|g| > t) \rightarrow 1$ as $t \rightarrow 0^+$.  Therefore by continuity we can choose $\epsilon$ small enough to lower bound $|\det C_0|$.  By a trivial upper bound on $s_1(C_0)$, which may be exponential in $r$ but has no dependence on $\epsilon$, we are guaranteed a lower bound on the smallest singular value $s_{2r}(C_0)$, and by inverting the linear system we have an approximation $X + O(\epsilon^{1/2})$.

Finally, from an approximation of $X$, we can take row ratios to approximate $\langle b_i, a_i \rangle$, which with the ratios above, and the knowledge that $B$ has unit norm rows, gives an arbitrarily good approximation of $B$ up to scaling rows by $\pm 1$.

\end{proof}

\section{Experimental Details}\label{sec:experimental_details_appendix}
\subsection{Linear SDE Recovery}\label{app:exp_det_lin}
For the true simulated linear SDE, $A$ and $B$ are sampled with iid Gaussian entries, and then normalized to have spectral norm equal to $0.9$.  The true decay matrix $D$ has each diagonal entry sampled iid from the uniform distribution on the interval $[1,2]$.  The model is trained with the loss given in Equation~\eqref{eq:lin_loss}, with the Adam optimizer~\citep{kingma2014adam} with an initial learning rate of $0.005$ and $3000$ iterations.  Additionally, due to the non-convexity of the objective, on each training instance we run 100 independent initializations and pick the one with the smallest training error.  This excludes the oversampled setting with $k=r * \log(n)$, where we get good performance with only 5 independent initializations.

The plotted mean and standard deviations in Figure~\ref{fig:linearsde} are based on 5 separate instantiations of the true model and fitting a new model on the stationary distribution.  Experiments were run on CPU.  All experiments (including those below) were run on a Linux system.

\subsection{Nonlinear SDE Recovery}\label{app:exp_det_nonlin_simple}

The true simulated $SDE$ has $A$ and $B^T$ sampled uniformly from the Stiefel manifold, and rescaled by $\gamma = 0.995$, with $\epsilon=10^{-5}$ and $\sigma(x) = 0.7x + 0.3\tanh(x)$.  We use ambient dimension $n=8$ and true low-rank dimension $r=2$.  To get training data, we sample 1000000 samples from each perturbed SDE, initialized at the stationary point with thinning of 300 and burnin of 100.  We then calculate the mean and covariance for use in the loss.

In training, the learned network is trained for 10000 iterations and learning rate $0.002$ also in Adam, using the loss $\mathcal{L}_{nonlin}$ introduced in Section~\ref{sec:non_linear_sde_recovery}.  The interventions are iid Gaussians with standard deviation of $0.5$.  We do 100 runs, choose the drift with smallest training loss, and measure the normalized Frobenius error against the true matrices $A$ and $B$ (after unit scaling and minimized over all column / row permutations, as the identifiability is only up to scaling and permutation), then report the mean and standard deviation across 10 independent instances.

\subsection{Synthetic Nonlinear SDE Generalization}\label{app:exp_det_nonlin}

To parameterize learnable activations, given an input $x \in \mathbb{R}^n$, we learn a function that acts elementwise on $x$ without necessarily being the same function on each element, i.e. $\sigma(x) = \left[\sigma_1(x_1), \dots, \sigma_n(x_n) \right] $.  We choose to parameterize each $\sigma_i$ as two-layer MLP using the actual sigmoid $\tilde\sigma$ as the activation function.  As a warm start, we also add $0.1\tilde\sigma(x)$.

We consider a fixed true SDE, with hidden dimension $r = 3$, where the rectangular matrices $A$ and $B^T$ have one down the diagonal and zero elsewhere, $D = I$, $\gamma = 0.98$, and the elementwise activation function is given by
\begin{align*}
    \sigma_1(x) &= 3\cos(3(x-0.5))\\
    \sigma_2(x) &= 2\sin(2(x+1.5)) - 1\\
    \sigma_3(x) &= \sigma_1(x)
\end{align*}
The hidden dimension of the activation MLPs is fixed at 20.

Each intervention is sampled as a Gaussian vector with mean zero and variance $0.1$ on each entry, and from each intervened SDE we collect 5000 samples.  To do this, we use the Euler-Maruyama scheme to solve the SDE, and use MCMC to draw approximately independent samples from the stationary distribution, using $dt = 0.01$, a thinning factor of $300$ and the first 500 samples treated as burn-in.  We use an radial basis function kernel to parameterize the KDS loss.  

We train with 10 interventions, and evaluate on 20 held-out interventions.  Training is done with AdamW~\citep{loshchilovdecoupled}, with initial learning rate $0.003$ and 50000 iterations.  The hyperparameter ranges considered are given in Table~\ref{tab:hyperparams}.  All experiments were run on CPU.

\begin{table}[h]
 \caption{Hyperparameter ranges considered for synthetic nonlinear SDE experiments.}
 \label{tab:hyperparams}
\centering
 \begin{tabular}{cc} 
 \hline
 Hyperparameter & Range \\
  \hline
  model hidden size $r$ & $\{4, 8, 16\}$ \\
  kernel bandwidth & $\{3,5,7\}$ \\
  $L_1$ weight regularization & $\{0, 10^{-5}, 10^{-4}\}$ \\
  \hline
 \end{tabular}
\end{table}

\subsection{Simulated GRN SDE Generalization}\label{app:exp_det_beeline}

We first give a summary of the simulator.  BoolODE is scRNA-seq data simulator that samples mRNA readouts via a stochastic differential equation (SDE). The underlying cyclic GRN is encoded in the SDE via a Hill function based approximation to a boolean logical circuit. $x_i$, $p_i$, and $R[i]$ each represents the mRNA concentration for gene $i$, the concentration of the protein encoded by gene $i$, and the set of proteins that regulate gene $i$. Further, $r$, $l_p$, $l_x$, and $s$ are scalar coefficients representing translation rate, RNA decay rate, protein decay rate, and noise standard deviation respectively. The functions $f_i(\cdot)$ encodes the gene regulatory network.  The default SDE is given by,
\begin{align}
    \frac{dx_i}{dt} &= f_i(p_{R[i]}) - l_x x_i + s \sqrt{x_i} dB_t \\
\frac{dp_i}{dt} &= r x_i - l_p p_i + s \sqrt{p_i} dB'_t
\end{align}
where $B$ and $B'$ are independent Brownian motions. 

We simulate overexpression (e.g., CRISPR-a) experiments by inducing perfect intervention coupled with increased transcription for a set of intervened genes $I_k$. For each intervened gene $j \in I_k$ we set $f_j(\cdot)=0$ and add a positive shift (set to 20 in the experiments) to $dx_j/dt$. We consider $K$ such intervention regimes, plus the observational setting ($k=0$) with no intervention, i.e, $I_0=\emptyset$, for a total of $K+1$ regimes. With this setup, we obtain a family of distributions $\{\rho_k\}_{k=0}^K$ in gene expression space. 

We adapt the neural ODE architecture proposed in~\citet{lin2025interpretable}.  The base SDE model under $I_k$ parametrizes the change of gene expression via the drift function as below. $A$ and $B$ are the coefficient matrices encoding the modular graph. $\alpha$ is a scaling vector controlling the rate of non-linear activation in the module, where the activation $\sigma$ is the logistic sigmoid.  Further, $\beta$ is a bias term that shifts the activation threshold of the modules. The GRN is extracted as $A \, \text{diag}(\alpha \circ \mathbf{1}_N) B$. The intervention is modeled as a combination of standard basis vectors, $\sum_{j \in I_k} e_{j}$, specifying the overexpression.  The model also learns a single diffusion coefficient scalar. Lastly, $M$ is a masking matrix blocking the signals from the intervened genes' regulators.  Altogether, the drift is given by,
\begin{align}
    v_k(x) = M_{k} A \sigma( \alpha \circ (B x - \beta) ) + \sum_{j \in I_k} e_{j} - Dx.
\end{align}
We alternatively consider the architecture using a learnable MLP to model the nonlinear activation of modular signals, 
\begin{align}
v_k(x) = M_{I_k} A \sigma_*  (B x - \beta)  + \sum_{j \in I_k} e_{j} - Dx
\end{align}
where $\sigma_*$ denotes the  MLP as described above.  The model loss is the Sinkhorn divergence~\citep{feydy2019interpolating} applied to the true perturbed distribution and the samples drawn from the learned SDE.

Each interventional distribution $\rho_k$ is obtained by taking the observational data $\rho_0$ as initial distribution and simulating the SDE via the Euler-Maruyama method.

We fit the hyperparameters by testing the ODE architecture in~\citet{lin2025interpretable} on the same simulated datasets, and then doing zero-shot transfer of the hyperparameters to the SDE models. For Bicycle, we sweep the hyperparameters in Table~\ref{tab:bi_hyperparams} and use the package defaults for the remaining values.   Experiments are run on an nvidia tesla V100 gpu.

\begin{table}[h]
 \caption{Hyperparameter ranges for GRN simulation.}
 \label{tab:bi_hyperparams}
\centering
 \begin{tabular}{cc} 
 \hline
 Hyperparameter & Range \\
  \hline
  scale $L_1$ & $\{0.0001, 0.001, 0.01, 0.1, 1.0\}$ \\
  scale spectral & $\{0, 1.0\}$ \\
  scale Lyapunov & $\{0.1, 1, 10\}$ \\
  \hline
 \end{tabular}
\end{table}

\end{document}